\newtheorem*{rep@theorem}{\rep@title}
\newcommand{\newreptheorem}[2]{%
\newenvironment{rep#1}[1]{%
 \def\rep@title{\theoremref{##1} Restated}%
 \begin{rep@theorem}}%
 {\end{rep@theorem}}}
\newtheorem*{rep@lemma}{\rep@title}
\newcommand{\newreplemma}[2]{%
\newenvironment{rep#1}[1]{%
 \def\rep@title{\lemmaref{##1} Restated}%
 \begin{rep@lemma}}%
 {\end{rep@lemma}}}
\newtheorem{theorem}{Theorem}
\newtheorem{definition}{Definition}
\newtheorem{lemma}{Lemma}
\newtheorem{claim}{Claim}
\newcommand{\namedref}[2]{\texorpdfstring{\hyperref[#2]{#1~\ref*{#2}}}{#1~\ref*{#2}}\xspace}
\newcommand{\lemmaref}[1]{\namedref{Lemma}{lem:#1}}
\newcommand{\theoremref}[1]{\namedref{Theorem}{thm:#1}}
\newcommand{\claimref}[1]{\namedref{Claim}{clm:#1}}
\newcommand{\figureref}[1]{\namedref{Figure}{fig:#1}}
\newcommand{\equationref}[1]{\namedref{Equation}{eq:#1}}
\newcommand{\inequalityref}[1]{\namedref{Inequality}{ineq:#1}}
\newcommand{\sectionref}[1]{\namedref{Section}{sec:#1}}
\newcommand{\appendixref}[1]{\namedref{Appendix}{app:#1}}
\newcommand{\ie}{\text{i.e.}\xspace}
\newcommand{\ceil}[1]{\ensuremath{\left\lceil{#1}\right\rceil}\xspace}
\newcommand{\floor}[1]{\ensuremath{\left\lfloor{#1}\right\rfloor}\xspace}
\newcommand{\abs}[1]{\ensuremath{\left\vert{#1}\right\vert}\xspace}
\newcommand{\eps}[0]{\ensuremath{\varepsilon}}
\let\epsilon\eps
\newcommand{\cA}{\ensuremath{{\mathcal A}}\xspace}
\newcommand{\cB}{\ensuremath{{\mathcal B}}\xspace}
\newcommand{\cD}{\ensuremath{{\mathcal D}}\xspace}
\newcommand{\cF}{\ensuremath{{\mathcal F}}\xspace}
\newcommand{\cP}{\ensuremath{{\mathcal P}}\xspace}
\newcommand{\cR}{\ensuremath{{\mathcal R}}\xspace}
\newcommand{\cS}{\ensuremath{{\mathcal S}}\xspace}
\newcommand{\cX}{\ensuremath{{\mathcal X}}\xspace}
\newcommand{\bbE}{\ensuremath{{\mathbb E}}\xspace}
\newcommand{\bbR}{\ensuremath{{\mathbb R}}\xspace}
\newcommand{\fR}{\ensuremath{{\mathfrak R}}\xspace}
\newcommand{\defeq}[0]{\ensuremath{\;{\vcentcolon=}\;}\xspace}
\newcommand{\E}[0]{\mathop{\bbE}\xspace}
\newcommand{\iid}[0]{\text{i.i.d.}\xspace}
\DeclareMathOperator{\tr}{Tr}
\newcommand{\mat}[1]{\boldsymbol{#1}}
\renewcommand{\vec}[1]{\boldsymbol{\mathrm{#1}}}
\newcommand{\vecalt}[1]{\boldsymbol{#1}}
\newcommand{\normof}[1]{\|#1\|}
\newcommand{\normoflr}[1]{\left|\hspace{-0.05cm}\left|#1\right|\hspace{-0.05cm}\right|}
\newcommand{\sbmat}[1]{\left[\begin{smallmatrix} #1 \end{smallmatrix}\right]}
\newcommand{\mK}{\ensuremath{\mat{K}}\xspace}
\newcommand{\vr}{\ensuremath{\vec{r}}\xspace}
\newcommand{\vw}{\ensuremath{\vec{w}}\xspace}
\newcommand{\vx}{\ensuremath{\vec{x}}\xspace}
\newcommand{\vsigma}{\ensuremath{\vecalt{\sigma}}\xspace}
\newcommand{\valpha}{\ensuremath{\vecalt{\alpha}}\xspace}
\newcommand{\vxi}{\ensuremath{\vecalt{\xi}}\xspace}
\newcommand{\vphi}{\ensuremath{\vecalt{\phi}}\xspace}
\DeclareMathOperator*{\kerdual}{DualSVM}
\newcommand{\mKt}[0]{\ensuremath{\tilde{\mK}}\xspace}
\newcommand{\ksigma}[0]{\ensuremath{k_{_\Sigma}}\xspace}
\newcommand{\news}[1]{#1}
\title{Optimality Implies Kernel Sum Classifiers are Statistically Efficient}
\author{
  Raphael Arkady Meyer\\
  Department of Computer Science\\
  Purdue University\\
  \texttt{meyer219@purdue.edu}
  \and
  Jean Honorio\\
  Department of Computer Science\\
  Purdue University\\
  \texttt{jhonorio@purdue.edu}
}
\begin{document}

\maketitle

\begin{abstract}
We propose a novel combination of optimization tools with learning theory bounds in order to analyze the sample complexity of \emph{optimal} kernel sum classifiers.
This contrasts the typical learning theoretic results which hold for all (potentially suboptimal) classifiers.
Our work also justifies assumptions made in prior work on multiple kernel learning.
As a byproduct of our analysis, we also provide a new form of Rademacher complexity for hypothesis classes containing only optimal classifiers.

\end{abstract}

\section{Introduction}
\label{sec:intro}

Classification is a fundamental task in machine learning \cite{shalev2014understanding,daume2012course,friedman2001elements}.
Kernel methods allow classifiers to learn powerful nonlinear relationships \cite{shawe2004kernel,balcan2006kernels}.
Optimization tools allow these methods to learn efficiently \cite{soentpiet1999advances}.
Under mild assumptions, kernels guarantee that learned models generalize well \cite{bartlett2002rademacher}.
However, the overall quality of these models still depends heavily on the choice of kernel.
To compensate for this, prior work considers learning how to linearly combine a set of arbitrary kernels into a good data-dependent kernel \cite{sonnenburg2006large,gonen2011multiple,bach2004multiple}.

It is known that if the learned linear combination of kernels is well behaved, then the kernel classifier generalizes well \cite{cortes2010generalization,cortes20092,argyriou2005learning}.
We extend this body of work by proving that if our classifier is optimal, then the linear combination of kernels is well behaved.
This optimality assumption is well justified because many common machine learning problems are solved using optimization algorithms.
For instance, in this paper we consider binary classification with Kernel Support Vector Machines (SVM), which are computed by solving a quadratic programming problem.
Specifically, we bound the sample complexity of kernel classifiers in two regimes.
In the first, we are forced to classify using the sum of a set of kernels.
In the second, we choose which kernels we include in our summation.

There exists substantial prior work considering learning kernels.
From the computational perspective, several theoretically sound and experimentally efficient algorithms are known \cite{cortes2009learning,cortes20092,kivinen2004online,sinha2016learning,duvenaud2013structure}.
Much of this work relies on optimization tools such as quadratic programs \cite{chen2009learning}, sometimes specifically considering Kernel SVM \cite{srebro2006learning}.
This motivates our focus on optimal classifiers for multiple kernel learning.
The literature on sample complexity for these problems always assumes that the learned combination of kernels is well behaved \cite{cortes2009new,cortes2012algorithms,cortes2013learning,srebro2006learning,sinha2016learning}.
That is, the prior work assumes that the weighted sum of kernel matrices \(\mKt_{_\Sigma}\) is paired with a vector \(\valpha_{_\Sigma}\) such that \(\valpha_{_\Sigma}^\intercal\mKt_{_\Sigma}\valpha_{_\Sigma}\leq C^2\) for some constant \(C\).
It is \news{unclear} how \(C\) depends on the structure or \news{number} of base kernels.
Our work provides bounds that explains this relationship for optimal classifiers.
Additionally, Rademacher complexity is typically used to control the generalization error over all possible (not necessarily optimal) estimators \cite{bartlett2002rademacher,koltchinskii2002empirical,kakade2009complexity}.
We differ from this approach by bounding the Rademacher complexity for only optimal estimators.
We are not aware of any prior work that explores such bounds.

\textbf{Contributions.~}
Our results start with a core technical theorem, which is then applied to two novel hypothesis classes.
\begin{itemize}
	\item
	We first show that the optimal solution to the Kernel SVM problem using the sum of \(m\) kernels is well behaved.
	That is, we consider the given kernel matrices \(\mKt_1,\ldots,\mKt_m\) and the corresponding Dual Kernel SVM solution vectors \(\valpha_1,\ldots,\valpha_m\), as well as the sum of these kernel matrices \(\mKt_{_\Sigma}\) and its Dual Kernel SVM solution vector \(\valpha_{_\Sigma}\).
	Using Karush-Kuhn-Tucker (KKT) optimality conditions, we prove that \(\valpha_{_\Sigma}^\intercal\mKt_{_\Sigma}\valpha_{_\Sigma}\leq 3m^{-0.58} B^2\) provided that all base kernels fulfill \(\valpha_t^\intercal\mKt_t\valpha_t\leq B^2\) for some constant \(B\).
	We are not aware of any existing bounds of this kind, and we provide Rademacher complexity analysis to leverage this result.
	Note that the previous bounds for the Rademacher complexity in multiple kernel learning assumes that \(\valpha_{_\Sigma}^\intercal\mKt_{_\Sigma}\valpha_{_\Sigma}\) is bounded.
\end{itemize}
We provide Rademacher complexity bounds for two novel hypothesis classes.
As opposed to traditional Rademacher bounds, our hypothesis classes only contain optimal classifiers.
The traditional analysis when using a \textit{single} kernel provides an empirical Rademacher complexity bound of \(O(\frac{BR}{\sqrt n})\), where \(n\) is the number of samples and \(k_t(\vx_i,\vx_i)\leq R^2\) bounds the radius of the samples in every feature space \cite{bartlett2002rademacher}.
\begin{itemize}
	\item
	\textbf{Kernel Sums:~}In the first set, Kernel SVM is required to use the sum of all \(m\) kernels.
	We show that the empirical Rademacher complexity is bounded by \(O(\frac{BR}{\sqrt n} m^{0.208})\).
	\item
	\textbf{Kernel Subsets:~}In the second set, Kernel SVM is allowed to use the sum of any subset of the \(m\) kernels.
	The classical analysis in this setting would pay a multiplicative factor of \(2^{m-1}\).
	The approach we use instead only pays with a factor of \(\sqrt{\ln(m)}\).
	We prove that the empirical Rademacher complexity is bounded by \(O(\frac{BR\sqrt{\ln(m)}}{\sqrt n} m^{0.208})\).
\end{itemize}
Note that these Rademacher bounds compare naturally to the traditional single kernel bound.
If we use a sum of \(m\) kernels instead of just \news{one} kernel, then we \news{pay a multiplicative factor} of \(m^{0.208}\).
If we use any subset of kernels, we only pay \news{an extra factor of} \(\sqrt{\ln(m)}\).
\news{Thus, in this work, we show} that optimization tools such as KKT conditions are useful in the analysis of statistical bounds.
These optimization bounds are leveraged by learning theoretic tools such as Rademacher complexity, as seen in the second and third bullet points.
Overall, we obtain new bounds with natural assumptions that connect the existing literature on optimization and learning theory in a novel fashion.
Additionally, these bounds justify assumptions made in the existing literature.

\section{Preliminaries}
\label{sec:prelim}

Let \(\cS=\{(\vx_1,y_1),\ldots,(\vx_n,y_n)\}\) denote a dataset of \(n\) \iid samples from some distribution \(\cD\), where \(\vx_i\in\cX\) and \(y_i\in\{-1,1\}\) for some \(\cX\).
Let \(\normof{\cdot}_2\) denote the \(\ell_2\) vector norm and \(\normof{\cdot}_1\) denote the \(\ell_1\) vector norm.
Let \([n]\defeq\{1,\ldots,n\}\) for any natural number \(n\).

Let \(k:\cX\times\cX\rightarrow\bbR\) denote a kernel function.
In this paper, we assume that all kernels fulfill \(\abs{k(\vx,\vx)}<\infty\) for all \(\vx\in\cX\).
We consider being given a set of kernels \(k_1,\ldots,k_m\).
Let \(k_{_\Sigma}(\cdot,\cdot)\defeq \sum_{t=1}^{m}k_t(\cdot,\cdot)\) denote the sum of the \(m\) kernels.
The above notation will be useful when learning with kernel sums.
Let \(\cP\subseteq[m]\).
Then define \(k_{_\cP}(\cdot,\cdot)\defeq\sum_{t\in \cP} k_t(\cdot,\cdot)\) as the sum of kernels as described by \(\cP\).
The latter notation will be useful when learning kernel subsets.

Given a dataset \cS and a kernel \(k_t\), we can build the corresponding kernel matrix \(\mK_t\in\bbR^{n \times n}\), where \([\mK_t]_{i,j} \defeq k_t(\vx_i,\vx_j)\).
Further, we can build the \textit{labeled kernel matrix} \(\mKt_t\), defined elementwise as \([\mKt_t]_{i,j}\defeq y_iy_jk_t(\vx_i,\vx_j)\).
To simplify notation, all our results use labeled kernel matrices instead of standard kernel matrices.

\subsection{Separable SVM}

We now present optimal kernel classification, first in the separable case.
\begin{definition}[Primal Kernel SVM]
Given a dataset \(\cS = \{(\vx_1,y_1),\ldots,(\vx_n,y_n)\}\) and a feature map \(\vphi:\cX\mapsto\bbR^{d}\), the Primal Kernel SVM problem is equivalent to the following optimization problem:
\begin{align*}
	\min_{\vw} &~ \frac12 \normof{\vw}_2^2 \\
	\text{s.t.} &~ 1 - y_i \vw^\intercal\vphi(\vx_i) \leq 0 ~~ \forall i \in [n]
\end{align*}
\end{definition}
We will mainly look at the corresponding dual problem:
\begin{definition}[Dual Kernel SVM]
\label{def:dual-svm}
Given a dataset \(\cS=\{(\vx_1,y_1),\ldots,(\vx_n,y_n)\}\) and a kernel function \(k(\cdot,\cdot)\) with associated labeled kernel matrix \mKt, the Dual Kernel SVM problem is equivalent to the following optimization problem:
\begin{align*}
	\max_{\valpha} &~ \normof{\valpha}_1 - \frac12 \valpha^\intercal \mKt \valpha \\
	\text{s.t.} &~ \alpha_i \geq 0 ~~ \forall i \in [n]
\end{align*}
\end{definition}
Since the dual optimization problem is defined entirely in terms of \mKt, we can denote the optimal \valpha as a function of the labeled kernel matrix.
We write this as \(\valpha = \kerdual(\mKt)\).

Recall that Karush-Kuhn-Tucker (KKT) conditions are necessary and sufficient for optimality in convex optimization problems \cite{boyd2004convex}.
We can express the KKT conditions of the Primal Kernel SVM as follows:

\emph{Primal Feasibility:}
\begin{align}
\label{eq:primal-feasible}
	1-y_i\vw^\intercal\vphi(\vx_i) \leq 0 ~~ \forall i \in [n]
\end{align}
\emph{Stationarity:}
\begin{align}
\label{eq:stationarity}
	\vw=\sum_{i=1}^{n}\alpha_iy_i\vphi(\vx_i)
\end{align}
\emph{Dual Feasibility:}
\begin{align}
\label{eq:dual-feasibility}
\alpha_i\geq0 ~~ \forall i \in [n]
\end{align}
\emph{Complementary Slackness:}
\begin{align}
\label{eq:complementary-slackness}
\alpha_i(1-y_i\vw^\intercal\vphi(\vx_i))=0
\end{align}

The above KKT conditions will be used with learning theoretic tools in order to provide novel generalization bounds.

\subsection{Non-separable SVM}
The primal and dual SVMs above assume that the given kernel is able to separate the data perfectly.
Since this is not always the case, we also consider non-separable data using \(\ell_2\) slack variables:
\begin{definition}[Primal Kernel SVM with Slack Variables]
Given \(C>0\), a dataset \(\cS = \{(\vx_1,y_1),\ldots,(\vx_n,y_n)\}\), and a feature map \(\vphi:\cX\mapsto\bbR^{d}\), the Primal Kernel SVM problem is equivalent to the following optimization problem:
\begin{align*}
	\min_{\vw} &~ \frac12 \normof{\vw}_2^2 + \frac{C}{2} \normof{\vxi}_2^2 \\
	\text{s.t.} &~ 1 - y_i \vw^\intercal\vphi(\vx_i) \leq \xi_i ~~ \forall i \in [n] \\
	&~ \xi_i \geq 0 ~~ \forall i \in [n]
\end{align*}
\end{definition}
\begin{definition}[Dual Kernel SVM with Slack Variables]
Given \(C>0\), a dataset \(\cS=\{(\vx_1,y_1),\ldots,(\vx_n,y_n)\}\) and a kernel function \(k(\cdot,\cdot)\) with associated labeled kernel matrix \mKt, the Dual Kernel SVM problem is equivalent to the following optimization problem:
\begin{align*}
	\max_{\valpha,\vxi} &~ \normof{\valpha}_1 - \frac12 \valpha^\intercal \mKt \valpha - \frac12 \normof{\vxi}_2^2 \\
	\text{s.t.} &~ 0 \leq \alpha_i \leq C\xi_i ~~ \forall i \in [n]
\end{align*}
\end{definition}
We denote the solution to the Dual SVM with Slack Variables using parameter \(C\) as \(\valpha=\kerdual_C(\mKt)\).

\subsection{Rademacher Complexity for Kernels}
We use Rademacher complexity to bound the sample complexity of kernel methods.
The empirical Rademacher complexity of a hypothesis class \cF with dataset \cS is defined as
\[
	\hat\fR_{\cS}(\cF) = \E_{\vsigma}\left[\sup_{h\in\cF}\left(\frac1n\sum_{i=1}^{n}\sigma_ih(\vx_i)\right)\right]
\]
where \(\vsigma\in\{-1,+1\}^n\) is a vector of Rademacher variables.
Bartlett and Mendelson introduced the analysis of sample complexity for kernel methods via Rademacher complexity when using one kernel \cite{bartlett2002rademacher}.
Bartlett and Mendelson considered the following hypothesis class of representer theorem functions:
\begin{align}
	\label{eq:bartlett-hypothesis}
	\cF \defeq
	\left\{
		\vx \mapsto \sum_{i=1}^{n} \alpha_i k(\vx,\vx_i),
		\vx_i\in\cX,
		\valpha^\intercal \mK \valpha \leq B^2
	\right\}
\end{align}
Each element of \cF is defined in terms of a dataset \(\vx_1,\ldots,\vx_n\) and an \valpha vector.
Bartlett and Mendelson showed that the probability of misclassification is bounded by the empirical risk of misclassification with a \(\gamma\)-Lipschitz loss plus a Rademacher term:
\begin{theorem}[Theorem 22 from \cite{bartlett2002rademacher}]
\label{thm:bartlett-lipschitz-risk}
Fix \(n \geq 0\), \(\gamma\in(0,1)\), and \(\delta\in(0,1)\).

Let \(\cS=\{(\vx_1,y_1),\ldots,(\vx_n,y_n)\}\) be a dataset of \(n\) \iid samples from \cD.
That is, let \(\cS\sim\cD^n\).
Define the \(\gamma\)-Lipschitz Loss function
\[
	\psi(x)\defeq
	\begin{cases}
		1 & x < 0 \\
		1-\frac x\gamma & 0 \leq x \leq \gamma \\
		0 & x > \gamma
	\end{cases}
\]
Let \(\eps\defeq(\frac8\gamma + 1)\sqrt{\frac{\ln(4/\delta)}{2n}}\).
Then with probability at least \(1-\delta\) over the choice of \cS, for all \(f\in\cF\) we have
\[
	\Pr_{(\vx,y)\sim\cD}[yf(x)\leq0]
	\leq
	\frac1n\sum_{i=1}^{n}\psi(y_i f(\vx_i)) + \frac{2}{\gamma} \hat\fR_\cS(\cF) + \eps
\]
\end{theorem}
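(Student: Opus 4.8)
The plan is to run the classical margin-bound argument: dominate the $0/1$ misclassification indicator by the surrogate $\psi$, prove one-sided uniform convergence of the empirical surrogate risk over $\cF$ via symmetrization and a bounded-differences inequality, and finally remove $\psi$ with Talagrand's contraction lemma. First I would check the pointwise domination $\mathbbm{1}[x\le 0]\le\psi(x)$ for all $x\in\bbR$: for $x\le 0$ we have $\psi(x)=1$, and for $x>0$ we have $\psi(x)\ge 0$. Applying this with $x=yf(\vx)$ and taking expectations over $(\vx,y)\sim\cD$ gives, for every $f\in\cF$,
\[
	\Pr_{(\vx,y)\sim\cD}[yf(\vx)\le 0]\ \le\ \E_{(\vx,y)\sim\cD}\!\left[\psi(yf(\vx))\right],
\]
so it suffices to bound the right-hand side uniformly over $f\in\cF$ by $\frac1n\sum_{i=1}^n\psi(y_if(\vx_i))$ plus the advertised error.

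Next I would invoke the standard Rademacher machinery on the induced loss class $\cG\defeq\{(\vx,y)\mapsto\psi(yf(\vx)):f\in\cF\}$, every member of which takes values in $[0,1]$. Writing $z_i=(\vx_i,y_i)$, the functional $\Phi(\cS)\defeq\sup_{g\in\cG}\left(\E_{z\sim\cD}[g(z)]-\frac1n\sum_{i=1}^n g(z_i)\right)$ changes by at most $1/n$ when one sample is replaced, so McDiarmid's inequality concentrates it about its mean; the symmetrization inequality bounds that mean by $2\,\E_{\cS}[\hat\fR_{\cS}(\cG)]$; and a second bounded-differences step swaps this for the data-dependent $\hat\fR_\cS(\cG)$. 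Combining these — and splitting the failure probability $\delta$ between the two concentration steps, which is what produces $\ln(4/\delta)$ — gives: with probability at least $1-\delta$ over $\cS\sim\cD^n$, for all $f\in\cF$ simultaneously,
\[
	\E_{(\vx,y)\sim\cD}\!\left[\psi(yf(\vx))\right]\ \le\ \frac1n\sum_{i=1}^n\psi(y_if(\vx_i))+2\,\hat\fR_\cS(\cG)+O\!\left(\sqrt{\tfrac{\ln(4/\delta)}{n}}\right).
\]

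It then remains to compare $\hat\fR_\cS(\cG)$ with $\hat\fR_\cS(\cF)$. Since $\psi$ is $\tfrac1\gamma$-Lipschitz, Talagrand's contraction (comparison) lemma yields $\hat\fR_\cS(\cG)\le\tfrac1\gamma\hat\fR_\cS(\cF)$; in applying it I would use that the labels $y_i\in\{-1,+1\}$ fold into the Rademacher signs (as $\sigma_iy_i$ has the same law as $\sigma_i$), and that adding a constant to every function of a class does not change its empirical Rademacher complexity (since $\E[\sigma_i]=0$), so the fact that $\psi$ does not vanish at $0$ is harmless. Plugging $\hat\fR_\cS(\cG)\le\tfrac1\gamma\hat\fR_\cS(\cF)$ into the previous display and chaining with the surrogate-domination inequality produces the claimed bound with the $\tfrac2\gamma\hat\fR_\cS(\cF)$ term and an additive term of the form $\eps$.

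The structural steps above are the textbook template; the only genuinely delicate part is pinning down the exact constant $\eps=(\tfrac8\gamma+1)\sqrt{\ln(4/\delta)/(2n)}$, which means carefully tracking the McDiarmid constants through both concentration steps and how they combine with the $1/\gamma$ coming out of contraction, rather than hiding them inside an $O(\cdot)$. I expect that bookkeeping — together with ensuring the one-sided bounded-differences bounds are assembled with the right split of $\delta$ instead of a loose union bound — to be where essentially all the care is needed; the rest is routine.
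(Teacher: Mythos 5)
This statement is imported verbatim from Bartlett and Mendelson (Theorem 22 of the cited reference), and the paper gives no proof of its own, so there is nothing internal to compare against; your proposal is a correct reconstruction of the standard argument used in that source (surrogate domination by $\psi$, symmetrization plus McDiarmid for uniform one-sided deviation of the $[0,1]$-valued loss class, then the $\tfrac1\gamma$-Lipschitz contraction step yielding the $\tfrac2\gamma\hat\fR_\cS(\cF)$ term). The only part you leave open — tracking the exact constant $(\tfrac8\gamma+1)\sqrt{\ln(4/\delta)/(2n)}$ through the two concentration steps — is exactly the bookkeeping done in the original reference, and your outline is consistent with it.
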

In this paper, our interest is in bounding this \(\hat\fR_\cS(\cF)\) term under reasonable assumptions on \cF.
We specifically consider two hypothesis classes defined over a set of \(m\) kernels.
First, we consider optimal kernel sum classification, where we must use the sum of all \(m\) given kernels:
\begin{align}
	\label{eq:cf-sum}
	\cF_{_{\Sigma}}
	\defeq
	\bigg\{
		&\vx \mapsto \sum_{i=1}^{n} \alpha_i y_i k_{_\Sigma}(\vx,\vx_i),
		\vx_i\in\cX,
		y_i\in\{-1,1\},\nonumber\\
		&\hspace{0.5cm} \valpha = \kerdual(\mKt_{_\Sigma}), \nonumber\\
		&\hspace{0.5cm} \valpha_t^\intercal \mKt_t \valpha_t \leq B^2 ~~ \forall t\in[m]
	\bigg\}
\end{align}
Second, we consider optimal kernel subset classification, where we \news{are allowed to use the sum of} any subset of the \(m\) given kernels:
\begin{align}
	\label{eq:cf-subset}
	\cF_{_{\cP}}
	\defeq
	\bigg\{
		&\vx \mapsto \sum_{i=1}^{n} \alpha_i y_i k_{_\cP}(\vx,\vx_i),
		\vx_i\in\cX,
		y_i\in\{-1,1\},
		\nonumber\\
		&\hspace{0.5cm}
		\cP\subseteq[m],
		\valpha = \kerdual(\mKt_{_\cP}),
		\nonumber\\
		&\hspace{0.5cm}
		\valpha_t^\intercal \mKt_t \valpha_t \leq B^2 ~~ \forall t\in[m]
	\bigg\}
\end{align}

Note that \(y_i\) is not present in Bartlett and Mendelson's hypothesis class in \equationref{bartlett-hypothesis}, but it is in \equationref{cf-sum} and \equationref{cf-subset}.
Regardless, \(\cF_{_\Sigma}\) and \(\cF_{_\cP}\) do not allow for a more general set of \(\valpha\) vectors.
This is because \(\alpha_i\) is allowed to be both positive and negative in \(\cF\).
However, in \(\cF_{_\Sigma}\) and \(\cF_{_\cP}\), \(\valpha\) is a dual optimal vector.
Dual Feasibility implies \(\alpha_i\geq0\).
Thus, by explicitly mentioning \(y_i\) in the definitions of \(\cF_{_\Sigma}\) and \(\cF_{_\cP}\), we are stating that \(\alpha_i\) in \(\cF\) equals \(\alpha_iy_i\) in \(\cF_{_\Sigma}\) and \(\cF_{_\cP}\).

Initial Rademacher complexity bounds for learning with a single kernel assume that \(\valpha^\intercal \mKt \valpha \leq B^2\) \cite{bartlett2002rademacher}.
Previous lines of work on multiple kernel learning then assume that \(\valpha_{_\Sigma}^\intercal \mKt_{_\Sigma} \valpha_{_\Sigma} \leq C^2\) for some constant \(C\) \cite{cortes2010generalization,cortes20092,sinha2016learning,srebro2006learning}.
We are interested in proving what values of \(C\) are reasonable.
To achieve this, we assume that \(\valpha_t^\intercal \mKt_t \valpha_t \leq B^2\) for all base kernels and show that \(\valpha_{_\Sigma}^\intercal \mKt_{_\Sigma} \valpha_{_\Sigma}\) is indeed bounded.

In \sectionref{svm-bounds}, we leverage our assumption that \(\valpha\) is optimal to build this bound on \(\valpha_{_\Sigma}^\intercal\mKt_{_\Sigma}\valpha_{_\Sigma}\).
In \sectionref{rademacher-bounds}, we demonstrate how our bound can augment existing techniques for bounding the Rademacher complexities of \equationref{cf-sum} and \equationref{cf-subset}.
That is, we bound \(\hat\fR_{_\cS}(\cF_{_\Sigma})\) and \(\hat\fR_{_\cS}(\cF_{_\cP})\).

\section{SVM Bounds for Sums of Kernels}

\label{sec:svm-bounds}
In this section, we leverage KKT conditions and SVM optimality to control the value of \(\valpha^\intercal\mKt\valpha\) as the number of kernels grows.
To start, we consider a single kernel \(k\):
\begin{lemma}
\label{lem:one-kernel}
Let \(\valpha=\kerdual(\mKt)\) for some kernel matrix \mKt.
Then \(\normof{\valpha}_1 = \valpha^\intercal\mKt\valpha\).
\end{lemma}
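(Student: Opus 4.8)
The plan is to prove the identity through the KKT conditions of the Primal Kernel SVM. Since that problem is convex, its KKT conditions are necessary and sufficient for optimality (as recalled in the preliminaries), so the dual optimal vector $\valpha = \kerdual(\mKt)$ comes paired with a primal optimal $\vw$ for which Primal Feasibility \equationref{primal-feasible}, Stationarity \equationref{stationarity}, Dual Feasibility \equationref{dual-feasibility}, and Complementary Slackness \equationref{complementary-slackness} all hold. I will show both $\valpha^\intercal \mKt \valpha = \normof{\vw}_2^2$ and $\normof{\valpha}_1 = \normof{\vw}_2^2$, and then chain the two equalities.

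For the first equality, I would substitute Stationarity, $\vw = \sum_{i=1}^n \alpha_i y_i \vphi(\vx_i)$, into $\normof{\vw}_2^2 = \vw^\intercal \vw$, which gives $\sum_{i=1}^n \sum_{j=1}^n \alpha_i \alpha_j y_i y_j \vphi(\vx_i)^\intercal \vphi(\vx_j) = \sum_{i=1}^n \sum_{j=1}^n \alpha_i \alpha_j y_i y_j k(\vx_i,\vx_j)$. By the definition of the labeled kernel matrix $[\mKt]_{i,j} = y_i y_j k(\vx_i,\vx_j)$, this last sum is exactly $\valpha^\intercal \mKt \valpha$.

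For the second equality, I would sum Complementary Slackness over $i \in [n]$ to get $\sum_{i=1}^n \alpha_i (1 - y_i \vw^\intercal \vphi(\vx_i)) = 0$, hence $\sum_{i=1}^n \alpha_i = \vw^\intercal (\sum_{i=1}^n \alpha_i y_i \vphi(\vx_i)) = \vw^\intercal \vw = \normof{\vw}_2^2$, where the middle step again uses Stationarity. Finally, Dual Feasibility gives $\alpha_i \geq 0$ for all $i$, so $\sum_{i=1}^n \alpha_i = \normof{\valpha}_1$. Combining, $\normof{\valpha}_1 = \normof{\vw}_2^2 = \valpha^\intercal \mKt \valpha$, which is the claim.

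There is no substantial obstacle; the only point deserving a careful sentence is the invocation of KKT, i.e., justifying that the dual optimum $\valpha$ is genuinely matched to a primal $\vw$ satisfying all four conditions (this is where convexity of the Primal Kernel SVM is used). I note that the lemma could also be obtained purely from the dual side: writing the stationarity condition of the concave dual objective $\normof{\valpha}_1 - \frac12 \valpha^\intercal \mKt \valpha$ under the constraints $\alpha_i \geq 0$ yields $(\mKt \valpha)_i = 1 + \mu_i$ with multipliers $\mu_i \geq 0$ and $\mu_i \alpha_i = 0$, so that $\valpha^\intercal \mKt \valpha = \sum_{i=1}^n \alpha_i (1 + \mu_i) = \sum_{i=1}^n \alpha_i = \normof{\valpha}_1$; however, the primal-side argument is preferable here since it directly reuses the KKT conditions already displayed in the preliminaries.
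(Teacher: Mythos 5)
Your proof is correct and uses essentially the same ingredients as the paper's: substitute Stationarity into Complementary Slackness, sum over \(i\), and invoke Dual Feasibility to identify \(\sum_i\alpha_i\) with \(\normof{\valpha}_1\). The only cosmetic difference is that you route the identity through \(\normof{\vw}_2^2\) as an explicit intermediate quantity, whereas the paper performs the same cancellation index by index before summing.
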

\begin{proof}
This proof follows from the KKT conditions provided in \sectionref{prelim}.
We start by substituting Stationarity (\equationref{stationarity}) into Complementary Slackness (\equationref{complementary-slackness}).
For all \(i\in[n]\),
\begin{align*}
	0 &= \alpha_i (1 - y_i \vw^\intercal \vphi(\vx_i)) \\
	0 &= \alpha_i \left(1 - \left(\sum\nolimits_{j=1}^{n} \alpha_jy_j\vphi(\vx_j)\right)^\intercal y_i \vphi(\vx_i)\right) \\
	0 &= \alpha_i \left(1 - \sum\nolimits_{j=1}^{n} \alpha_jy_iy_j\vphi(\vx_j)^\intercal\vphi(\vx_i)\right) \\
	0 &= \alpha_i - \sum_{j=1}^{n} \alpha_i\alpha_jy_iy_j\vphi(\vx_j)^\intercal\vphi(\vx_i) \\
	\alpha_i &= \sum_{j=1}^{n} \alpha_i\alpha_j[\mKt]_{i,j}
\end{align*}
We can then take the sum of both sides over all \(i\):
\begin{align}
	\sum_{i=1}^n\alpha_i &= \sum_{i=1}^n\sum_{j=1}^{n} \alpha_i\alpha_j[\mKt]_{i,j} \nonumber \\
	\label{eq:one-kernel-equals}
	\normof{\valpha}_1 &= \valpha^\intercal\mKt\valpha
\end{align}
Note that \(\sum_{i=1}^n\alpha_i=\normof{\valpha}_1\) since Dual Feasibility (\equationref{dual-feasibility}) tells us that \(\alpha_i \geq 0\).
\end{proof}

\lemmaref{one-kernel} is mathematically \news{meaningful since} at the optimal point \(\valpha\), the Dual SVM takes objective value exactly equal to \(\frac12 \valpha^\intercal\mKt\valpha\).
This connects the objective value at the optimal point to the term we want to control.
With this in mind, we now move on to consider having two kernels \(k_1\) and \(k_2\).
\begin{theorem}
\label{thm:two-kernels}
Let \(\cS=\{(\vx_1,y_1),\ldots,(\vx_n,y_n)\}\) be a dataset.
Let \(k_1,k_2\) be kernel functions.
Define \(k_{1+2}(\cdot,\cdot)\defeq k_1(\cdot, \cdot) + k_2(\cdot, \cdot)\).
Let \(\mKt_1,\mKt_2,\mKt_{1+2}\) be their labeled kernel matrices and \(\valpha_1,\valpha_2\valpha_{1+2}\) be the corresponding Dual SVM solutions.
Then we have
\[
	\valpha_{1+2}^\intercal\mKt_{1+2}\valpha_{1+2}
	\leq
	\frac13(\valpha_1^\intercal\mKt_1\valpha_1 + \valpha_2^\intercal\mKt_2\valpha_2)
\]
Furthermore,
\begin{align}
	\label{ineq:two-thirds}
	\valpha_{1+2}^\intercal\mKt_{1+2}\valpha_{1+2}
	\leq
	\frac23 \max\{\valpha_1^\intercal\mKt_1\valpha_1, \valpha_2^\intercal\mKt_2\valpha_2\}
\end{align}
\end{theorem}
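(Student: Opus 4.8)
The target is an \emph{upper} bound on $\valpha_{1+2}^\intercal\mKt_{1+2}\valpha_{1+2}$, so the optimality of $\valpha_{1+2}$ itself is the wrong lever: comparing the optimal dual objective of the summed problem against feasible competitors only bounds $\valpha_{1+2}^\intercal\mKt_{1+2}\valpha_{1+2}$ from \emph{below}. The plan is instead to exploit that $\valpha_{1+2}$ is a merely feasible, hence suboptimal, point for each of the two single-kernel dual problems, so that the single-kernel optima $\valpha_1,\valpha_2$ beat it there. Throughout I would abbreviate $Q_t\defeq\valpha_t^\intercal\mKt_t\valpha_t$ for $t\in\{1,2\}$ and $Q_{1+2}\defeq\valpha_{1+2}^\intercal\mKt_{1+2}\valpha_{1+2}$.

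First I would record each dual optimum value. By \lemmaref{one-kernel} we have $\normof{\valpha_t}_1=Q_t$, so the optimal objective of the Dual Kernel SVM (\defref{dual-svm}) for kernel $t$ equals $\normof{\valpha_t}_1-\tfrac12 Q_t=\tfrac12 Q_t$, and the same identity gives $\normof{\valpha_{1+2}}_1=Q_{1+2}$. Next, $\valpha_{1+2}$ is dual feasible, since its entries are nonnegative by \equationref{dual-feasibility}; it therefore lies in the common feasible set $\{\valpha:\alpha_i\ge0\}$ of the single-kernel problems for $k_1$ and $k_2$. Because $\valpha_1$ and $\valpha_2$ are optimal for those problems, testing the competitor $\valpha_{1+2}$ yields, for $t\in\{1,2\}$,
\[
  \tfrac12 Q_t \;\ge\; \normof{\valpha_{1+2}}_1 - \tfrac12\,\valpha_{1+2}^\intercal\mKt_t\valpha_{1+2}.
\]

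Summing the $t=1$ and $t=2$ inequalities, the step that makes everything close is that the two cross quadratic forms collapse under additivity of the kernel matrices: $\valpha_{1+2}^\intercal\mKt_1\valpha_{1+2}+\valpha_{1+2}^\intercal\mKt_2\valpha_{1+2}=\valpha_{1+2}^\intercal(\mKt_1+\mKt_2)\valpha_{1+2}=Q_{1+2}$. Substituting $\normof{\valpha_{1+2}}_1=Q_{1+2}$ then gives
\[
  \tfrac12(Q_1+Q_2)\;\ge\;2Q_{1+2}-\tfrac12 Q_{1+2}=\tfrac32 Q_{1+2},
\]
which rearranges to $Q_{1+2}\le\tfrac13(Q_1+Q_2)$, the first claim. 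The second inequality follows at once from $Q_1+Q_2\le2\max\{Q_1,Q_2\}$, giving $Q_{1+2}\le\tfrac23\max\{Q_1,Q_2\}$.

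The only genuine idea here is the choice of competitor, and I expect that to be the single point worth getting right: the upper bound must be extracted from the optimality of the \emph{base} solutions rather than the summed one, and it is precisely the additivity collapse that lets the two separate comparisons recombine into one statement about $Q_{1+2}$. Beyond that I anticipate no analytic difficulty; the only bookkeeping is applying \lemmaref{one-kernel} at each of the three optima. As a sanity check I would note that testing the rescaled competitor $c\,\valpha_{1+2}$ and optimizing over $c>0$ would sharpen the constant from $\tfrac13$ to $\tfrac14$, so the plain choice $c=1$ comfortably suffices for the stated bound.
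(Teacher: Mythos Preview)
Your proposal is correct and follows essentially the same approach as the paper: both use $\valpha_{1+2}$ as a feasible competitor in each single-kernel dual problem, apply the optimality of $\valpha_1,\valpha_2$ there, sum the two inequalities so that the cross terms collapse via $\mKt_1+\mKt_2=\mKt_{1+2}$, and invoke \lemmaref{one-kernel} at all three optima to reach $\tfrac32 Q_{1+2}\le\tfrac12(Q_1+Q_2)$. Your presentation applies \lemmaref{one-kernel} up front rather than at the end, but the argument is otherwise the same.
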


\begin{proof}
First recall that if \(\valpha=\kerdual(\mKt)\), then for all other dual feasible \(\valpha'\),
\begin{align}
	\label{ineq:app-dual-optimality}
	\normof{\valpha'}_1 - \frac12 {\valpha'}^\intercal\mKt\valpha'
	\leq
	\normof{\valpha}_1 - \frac12 \valpha^\intercal\mKt\valpha
\end{align}
Also note that \(\mKt_{1+2}=\mKt_1+\mKt_2\).
We start the proof by looking at the Dual SVM objective for \(k_{1+2}\), and distributing over the labeled kernel matrices:
\begin{align*}
	\normof{\valpha_{1+2}}_1 &- \frac12 \valpha_{1+2}^\intercal\mKt_{1+2}\valpha_{1+2} \\
	&= \normof{\valpha_{1+2}}_1 - \frac12 \valpha_{1+2}^\intercal(\mKt_1+\mKt_2)\valpha_{1+2} \\
	&= \normof{\valpha_{1+2}}_1 - \frac12 \valpha_{1+2}^\intercal\mKt_1\valpha_{1+2} \\
	&\phantom{=} ~~ \phantom{\normof{\valpha_{1+2}}_1} - \frac12 \valpha_{1+2}^\intercal\mKt_2\valpha_{1+2} \\
\intertext{We now introduce an extra \(\normof{\valpha_{1+2}}_1\) term by adding zero. This allows us to form two expressions that look like Dual SVM Objectives.}
	\normof{\valpha_{1+2}}_1 &- \frac12 \valpha_{1+2}^\intercal\mKt_{1+2}\valpha_{1+2} \\
	&= \normof{\valpha_{1+2}}_1 - \frac12 \valpha_{1+2}^\intercal\mKt_1\valpha_{1+2} \\
	&\phantom{=} ~~ \phantom{\normof{\valpha_{1+2}}_1} - \frac12 \valpha_{1+2}^\intercal\mKt_2\valpha_{1+2} \\
	&\phantom{=} ~~ \phantom{\normof{\valpha_{1+2}}_1} + \normof{\valpha_{1+2}}_1 - \normof{\valpha_{1+2}}_1 \\
	&= \left(\normof{\valpha_{1+2}}_1 - \frac12 \valpha_{1+2}^\intercal\mKt_1\valpha_{1+2}\right) \\
	&\phantom{=} ~~ + \left(\normof{\valpha_{1+2}}_1 - \frac12 \valpha_{1+2}^\intercal\mKt_2\valpha_{1+2}\right) \\
	&\phantom{=} ~~ - \normof{\valpha_{1+2}}_1 \\
\intertext{We then apply \inequalityref{app-dual-optimality} to both of these parentheses:}
	\normof{\valpha_{1+2}}_1 &- \frac12 \valpha_{1+2}^\intercal\mKt_{1+2}\valpha_{1+2} \\
	&\leq \left(\normof{\valpha_1}_1 - \frac12 \valpha_1^\intercal\mKt_1\valpha_1\right) \\
	&\phantom{=} ~~ + \left(\normof{\valpha_2}_1 - \frac12 \valpha_2^\intercal\mKt_2\valpha_2\right) \\
	&\phantom{=} ~~ - \normof{\valpha_{1+2}}_1
\end{align*}
Reorganizing the above equation, we get
\begin{align}
	2\normof{\valpha_{1+2}}_1 - \frac12 \valpha_{1+2}^\intercal&\mKt_{1+2}\valpha_{1+2} \nonumber \\
	&\leq \left(\normof{\valpha_1}_1 - \frac12 \valpha_1^\intercal\mKt_1\valpha_1\right) \nonumber \\
\label{ineq:two-kernels-part}
	&\phantom{=} ~~ + \left(\normof{\valpha_2}_1 - \frac12 \valpha_2^\intercal\mKt_2\valpha_2\right)
\end{align}
Next, we use \lemmaref{one-kernel} to simplify all three expression that remain:
\begin{itemize}
	\item \(2\normof{\valpha_{1+2}}_1 - \frac12 \valpha_{1+2}^\intercal\mKt_{1+2}\valpha_{1+2} = \frac32\normof{\valpha_{1+2}}_1\)
	\item \(\normof{\valpha_1}_1 - \frac12 \valpha_1^\intercal\mKt_1\valpha_1 = \frac12\normof{\valpha_1}_1\)
	\item \(\normof{\valpha_2}_1 - \frac12 \valpha_2^\intercal\mKt_2\valpha_2 = \frac12\normof{\valpha_2}_1\)
\end{itemize}
Returning to our bound from \inequalityref{two-kernels-part}, we have
\begin{align}
\label{ineq:two-kernels-final}
	\frac32\normof{\valpha_{1+2}}_1 \leq \frac12\normof{\valpha_1}_1 + \frac12\normof{\valpha_2}_1
\end{align}
Once we rearrange the constants in \inequalityref{two-kernels-final}, we complete the proof.
\end{proof}
The constant of \(\frac23\) in \inequalityref{two-thirds} is \news{advantageous}.
Since this ratio is below 1, we can recursively apply this theorem to get a vanishing fraction.
As \(m\) increases, we should now expect \(\valpha_{_\Sigma}^\intercal\mKt_{_\Sigma}\valpha_{_\Sigma}\) to decrease.
We formalize this notion in the following theorem, where we consider using the sum of \(m\) kernels.

\begin{theorem}
\label{thm:svm-many-kernels}
Let \(\cS=\{(\vx_1,y_1),\ldots,(\vx_n,y_n)\}\) be a dataset.
Let \(k_1,k_2,\ldots,k_m\) be kernel functions.
Define \(k_{_\Sigma}(\cdot,\cdot)\defeq \sum_{t=1}^{m} k_t(\cdot, \cdot)\).
Let \(\mKt_1,\ldots,\mKt_m,\mKt_{_\Sigma}\) be their labeled kernel matrices and \(\valpha_1,\ldots,\valpha_m,\valpha_{_\Sigma}\) be the corresponding Dual SVM solutions.
Then we have
\[
	\valpha_{_\Sigma}^\intercal\mKt_{_\Sigma}\valpha_{_\Sigma}
	\leq
	3 m^{-\log_2(3)} \sum_{t=1}^{m}\valpha_t^\intercal\mKt_t\valpha_t
\]
Furthermore,
\[
	\valpha_{_\Sigma}^\intercal\mKt_{_\Sigma}\valpha_{_\Sigma}
	\leq
	3 m^{-\log_2(\nicefrac32)} \max_{t\in[m]} \valpha_t^\intercal\mKt_t\valpha_t
\]
In the special case that \(m\) is a power of \(2\), we have
\begin{align*}
	\valpha_{_\Sigma}^\intercal\mKt_{_\Sigma}\valpha_{_\Sigma}
	&\leq
	m^{-\log_2(3)} \sum_{t=1}^{m}\valpha_t^\intercal\mKt_t\valpha_t \\
	&\leq
	m^{-\log_2(\nicefrac32)} \max_{t\in[m]} \valpha_t^\intercal\mKt_t\valpha_t \\
\end{align*}
\end{theorem}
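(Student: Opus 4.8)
The plan is to prove \theoremref{svm-many-kernels} by a divide-and-conquer recursion built on \theoremref{two-kernels}. The crucial observation is that \theoremref{two-kernels} is stated for \emph{arbitrary} kernels $k_1,k_2$, and a sum of kernels is again a kernel, so the theorem applies unchanged when $k_1$ and $k_2$ are themselves sums of base kernels (as throughout the paper, I assume the relevant Dual SVMs are solvable). For a nonempty $\cA\subseteq[m]$ write $\mKt_{\cA}\defeq\sum_{t\in\cA}\mKt_t$, $\valpha_{\cA}\defeq\kerdual(\mKt_{\cA})$, and $T(\cA)\defeq\valpha_{\cA}^\intercal\mKt_{\cA}\valpha_{\cA}$, so that $T(\{t\})=\valpha_t^\intercal\mKt_t\valpha_t$ and $T([m])=\valpha_{_\Sigma}^\intercal\mKt_{_\Sigma}\valpha_{_\Sigma}$. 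If $\cA=\cB_1\sqcup\cB_2$ is a partition, then $k_{\cB_1}+k_{\cB_2}=k_{\cA}$ and $\mKt_{\cB_1}+\mKt_{\cB_2}=\mKt_{\cA}$, so \theoremref{two-kernels} applied to $k_{\cB_1}$ and $k_{\cB_2}$ gives
\[
	T(\cA)\;\leq\;\tfrac13\bigl(T(\cB_1)+T(\cB_2)\bigr).
\]

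Next I would show, by strong induction on $|\cA|$, that $T(\cA)\leq 3^{-\lfloor\log_2|\cA|\rfloor}\sum_{t\in\cA}\valpha_t^\intercal\mKt_t\valpha_t$ for every nonempty $\cA\subseteq[m]$. The base case $|\cA|=1$ holds since the exponent is $0$. For $|\cA|\geq2$, split $\cA$ into $\cB_1,\cB_2$ of sizes $\lceil|\cA|/2\rceil$ and $\lfloor|\cA|/2\rfloor$ (both strictly smaller than $|\cA|$), apply the one-step inequality, and then the inductive hypothesis to each $T(\cB_j)$. The only arithmetic input is the elementary identity $\lfloor\log_2\lfloor n/2\rfloor\rfloor=\lfloor\log_2 n\rfloor-1$ and the bound $\lfloor\log_2\lceil n/2\rceil\rfloor\geq\lfloor\log_2 n\rfloor-1$ for integers $n\geq2$; hence each subtree contributes a factor at most $3^{-(\lfloor\log_2|\cA|\rfloor-1)}$, which together with the leading $\tfrac13$ telescopes to exactly $3^{-\lfloor\log_2|\cA|\rfloor}$ times $\sum_{t\in\cA}\valpha_t^\intercal\mKt_t\valpha_t$. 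Equivalently, one may unroll any balanced binary tree whose $m$ leaves are the base kernels: iterating the one-step inequality from the root downward gives $T([m])\leq\sum_{t=1}^m 3^{-d_t}\valpha_t^\intercal\mKt_t\valpha_t$, and in a balanced tree every leaf depth satisfies $d_t\geq\lfloor\log_2 m\rfloor$.

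Finally, taking $\cA=[m]$ and using $\lfloor\log_2 m\rfloor>\log_2 m-1$ yields $3^{-\lfloor\log_2 m\rfloor}<3\cdot 3^{-\log_2 m}=3m^{-\log_2 3}$, which is the first claimed bound; the second follows from $\sum_{t=1}^m\valpha_t^\intercal\mKt_t\valpha_t\leq m\max_{t\in[m]}\valpha_t^\intercal\mKt_t\valpha_t$ together with $1-\log_2 3=-\log_2(\nicefrac32)$. When $m$ is a power of $2$ the floor is exact, so $3^{-\lfloor\log_2 m\rfloor}=m^{-\log_2 3}$ and the factor of $3$ drops out, giving the last two inequalities in the same way. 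I expect the recursion and telescoping to be routine; the step that warrants the most care is the floor-function bookkeeping pinning down why an imbalanced level costs no more than a single extra factor of $3$ relative to the power-of-two case, together with the (easy but worth noting) check that \theoremref{two-kernels} is legitimately invoked on aggregated kernels with well-defined Dual SVM solutions.
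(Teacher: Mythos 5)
Your proposal is correct and follows essentially the same route as the paper's proof: both recursively halve the kernel set and apply \theoremref{two-kernels} to the aggregated kernels at each level of a balanced binary tree, then convert the resulting power of $\tfrac13$ into $3\,m^{-\log_2 3}$. The only difference is bookkeeping---your strong induction with the invariant $T(\cA)\leq 3^{-\floor{\log_2\abs{\cA}}}\sum_{t\in\cA}\valpha_t^\intercal\mKt_t\valpha_t$ replaces the paper's explicit bitstring labeling and its separate treatment of the ragged last level, and it recovers the power-of-two special case directly since the floor is then exact.
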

\begin{proof}[Proof sketch]
We provide an intuitive proof for \(m=8\).
The full proof is in \appendixref{svm-many-kernels}.

Since \(m\) is a power of two, we can label each of the base kernels with length \(\ell=\log_2(m)=3\) bitstrings:
\begin{figure}[H]
\centering
\scalebox{0.7}
{
\begin{tikzpicture}
\node (k000) at (0   ,0){\(k_{000}\)};
\node (k001) at (1.5 ,0){\(k_{001}\)};
\node (k010) at (3   ,0){\(k_{010}\)};
\node (k011) at (4.5 ,0){\(k_{011}\)};
\node (k100) at (6   ,0){\(k_{100}\)};
\node (k101) at (7.5 ,0){\(k_{101}\)};
\node (k110) at (9   ,0){\(k_{110}\)};
\node (k111) at (10.5,0){\(k_{111}\)};
\end{tikzpicture}
}
\end{figure}
Then, for each pair of kernels that differ only in the last digit, define a new kernel as their sum.
For instance, define \(k_{10}(\cdot,\cdot)\defeq k_{100}(\cdot,\cdot) + k_{101}(\cdot,\cdot)\).
Repeat this process all the way to the root node.
\begin{figure}[H]
\centering
\scalebox{0.7}
{
\begin{tikzpicture}
\node (k000) at (0   ,0){\(k_{000}\)};
\node (k001) at (1.5 ,0){\(k_{001}\)};
\node (k010) at (3   ,0){\(k_{010}\)};
\node (k011) at (4.5 ,0){\(k_{011}\)};
\node (k100) at (6   ,0){\(k_{100}\)};
\node (k101) at (7.5 ,0){\(k_{101}\)};
\node (k110) at (9   ,0){\(k_{110}\)};
\node (k111) at (10.5,0){\(k_{111}\)};

\node (k00) at (0.75,1.5){\(k_{00}\)};
\node (k01) at (3.75,1.5){\(k_{01}\)};
\node (k10) at (6.75,1.5){\(k_{10}\)};
\node (k11) at (9.75,1.5){\(k_{11}\)};

\draw (k000) -- (k00);
\draw (k001) -- (k00);
\draw (k010) -- (k01);
\draw (k011) -- (k01);
\draw (k100) -- (k10);
\draw (k101) -- (k10);
\draw (k110) -- (k11);
\draw (k111) -- (k11);

\node (k0) at (2.25,3){\(k_{0}\)};
\node (k1) at (8.25,3){\(k_{1}\)};

\draw (k00) -- (k0);
\draw (k01) -- (k0);
\draw (k10) -- (k1);
\draw (k11) -- (k1);

\node (ks) at (5.25,4.5){\(\ksigma\)};

\draw (k0) -- (ks);
\draw (k1) -- (ks);
\end{tikzpicture}
}
\end{figure}

By \theoremref{svm-many-kernels}, we know that
\[
	\valpha_{_\Sigma}^\intercal\mKt_{_\Sigma}\valpha_{_\Sigma}
	\leq
	\frac13(
		\valpha_{0}^\intercal\mKt_{0}\valpha_{0} +
		\valpha_{1}^\intercal\mKt_{1}\valpha_{1}
	)
\]
Going down one level, by applying \theoremref{svm-many-kernels} again, we know that
\[
	\valpha_{0}^\intercal\mKt_{0}\valpha_{0}
	\leq
	\frac13(
		\valpha_{00}^\intercal\mKt_{00}\valpha_{00} +
		\valpha_{01}^\intercal\mKt_{01}\valpha_{01}
	)
\]
Therefore, by similarly applying \theoremref{svm-many-kernels} to \(\valpha_{1}^\intercal\mKt_{1}\valpha_{1}\), we can combine these claims:
\begin{align*}
	\valpha_{_\Sigma}^\intercal\mKt_{_\Sigma}\valpha_{_\Sigma}
	\leq
	\left(\frac13\right)^2 (
		&\valpha_{00}^\intercal\mKt_{00}\valpha_{00} +
		\valpha_{01}^\intercal\mKt_{01}\valpha_{01} + \\
		&\valpha_{10}^\intercal\mKt_{10}\valpha_{10} +
		\valpha_{11}^\intercal\mKt_{11}\valpha_{11}
	)
\end{align*}
We can then continue until all 8 kernels are included:
\[
	\valpha_{_\Sigma}^\intercal\mKt_{_\Sigma}\valpha_{_\Sigma}
	\leq
	\left(\frac13\right)^3 \sum_{t=1}^{m} \valpha_{t}^\intercal\mKt_{t}\valpha_{t}
\]
Note that the exponent of \(\frac13\) is the depth of the tree, equivalent to the length \(\ell\) of our bitstring labels.
In the general case, we have
\begin{align*}
	\valpha_{_\Sigma}^\intercal\mKt_{_\Sigma}\valpha_{_\Sigma}
	& \leq \left(\frac13\right)^{\log_2(m)} \sum_{t=1}^{m} \valpha_{t}^\intercal\mKt_{t}\valpha_{t} \\
	& = m^{-\log_2(3)} \sum_{t=1}^{m} \valpha_{t}^\intercal\mKt_{t}\valpha_{t}
\end{align*}
This completes the analysis if \(m\) is a power of 2.
If we do not have an exact power of two number of kernels, then our tree has depth \(\ell-1\) for some leaves.
Therefore, we place a floor function around \(\log_2(3)\):
\begin{align*}
	\valpha_{_\Sigma}^\intercal\mKt_{_\Sigma}\valpha_{_\Sigma}
	& \leq \left(\frac13\right)^{\floor{\log_2(m)}} \sum_{t=1}^{m} \valpha_{t}^\intercal\mKt_{t}\valpha_{t} \\
	& \leq 3 \left(\frac13\right)^{\log_2(m)} \sum_{t=1}^{m} \valpha_{t}^\intercal\mKt_{t}\valpha_{t} \\
	& = 3 m^{-\log_2(3)} \sum_{t=1}^{m} \valpha_{t}^\intercal\mKt_{t}\valpha_{t}
\end{align*}
To achieve the final result, we bound the summation with
\[\sum_{t=1}^{m}\valpha_t^\intercal\mKt_t\valpha_t \leq m \max_{t\in[m]} \valpha_t^\intercal\mKt_t\valpha_t\]
and simplify the resulting expression.
\end{proof}

We take a moment to reflect on this result.
It has been well established that the generalization error of kernel classifiers depends on \(\valpha_{_\Sigma}^\intercal \mKt_{_\Sigma} \valpha_{_\Sigma}\) \cite{cortes2009new,cortes2012algorithms,cortes2013learning,srebro2006learning,sinha2016learning}.
\theoremref{svm-many-kernels} shows that this term actually decreases in the number of kernels.
In the next section, we show how this theorem translates into generalization error results.

\section{Rademacher Bounds}
\label{sec:rademacher-bounds}
In this section we apply \theoremref{svm-many-kernels} to bound the Rademacher complexity of learning with sums of kernels.
To better parse and understand these bounds, we make two common assumptions:
\begin{itemize}
	\item Each base kernel has a bounded Dual SVM solution:
	 	\[\valpha_t^\intercal\mKt_t\valpha_t\leq B^2 ~~~ \forall t\in[m]\]
 	\item Each vector has a bounded \(\ell_2\) norm in each feature space:
	 	\[k_t(\vx_i,\vx_i)\leq R^2 ~~~ \forall t\in[m],i\in[n]\]
 \end{itemize}
The classical bound in \cite{bartlett2002rademacher} on the Rademacher complexity of kernel functions looks at a \textit{single} kernel, and provides the bound
\[
	\hat\fR_\cS(\cF)\leq \frac{BR}{\sqrt n}
\]
where the hypothesis class \(\cF\) is defined in \equationref{bartlett-hypothesis}.
Our bounds are on the order of \(\frac{BR}{\sqrt n} m^{0.208}\).
That is, when moving from one kernel to many kernels, we pay sublinearly in the number of kernels.

We first see this with our bound on the Rademacher complexity of the kernel sum hypothesis class \(\hat\cR_\cS(\cF_{_\Sigma})\) defined in \equationref{cf-sum}:
\begin{theorem}
\label{thm:rademacher-kernel-sum}
Let \(\cS=\{(\vx_1,y_1),\ldots,(\vx_n,y_n)\}\) be a dataset.
Let \(k_1,\ldots,k_m\) be kernel functions.
Define \(k_{_\Sigma}(\cdot,\cdot)\defeq\sum_{t=1}^{m}k_t(\cdot,\cdot)\).
Let \(\mKt_1,\ldots,\mKt_m,\mKt_{_\Sigma}\) be their labeled kernel matrices and \(\valpha_1,\ldots,\valpha_m,\valpha_{_\Sigma}\) be the corresponding Dual SVM solutions.
Then,
\[
	\hat\fR_\cS(\cF_{_\Sigma})
	\leq
	\frac1n\sqrt{3m^{-\log_2(3)} \left(\sum_{t=1}^{m}\tr[\mKt_t]\right) \sum_{t=1}^{m}\valpha_t^\intercal\mKt_t\valpha_t}
\]
Furthermore, if we assume that \(\valpha_t^\intercal\mKt_t\valpha_t\leq B^2\) and \(k_t(\vx_i,\vx_i)\leq R^2\) for all \(t\in[m]\) and \(i\in[n]\), then we have
\[
	\hat\fR_\cS(\cF_{_\Sigma})
	\leq
	\frac{BR}{\sqrt n} ~ \sqrt{3m^{(1-\log_2(\nicefrac32))}}
	\in
	O\left(\frac{BRm^{0.208}}{\sqrt n}\right)
\]
\end{theorem}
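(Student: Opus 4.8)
The plan is to follow the classical feature-space argument of Bartlett and Mendelson for kernel classes, inserting \theoremref{svm-many-kernels} precisely where the norm of the classifier must be controlled. Fix a feature map $\vphi$ realizing $k_{_\Sigma}(\vx,\vx')=\langle\vphi(\vx),\vphi(\vx')\rangle$. Any $h\in\cF_{_\Sigma}$ with coefficient vector $\valpha$ can be written in this feature space as $h=\sum_{i=1}^n\alpha_iy_i\vphi(\vx_i)$, so that $h(\vx_j)=\langle h,\vphi(\vx_j)\rangle$. First I would observe that $\normof{h}^2=\sum_{i,j}\alpha_i\alpha_jy_iy_j\langle\vphi(\vx_i),\vphi(\vx_j)\rangle=\valpha^\intercal\mKt_{_\Sigma}\valpha$, where the $y_iy_j$ factors are exactly what convert the plain Gram matrix into the labeled matrix $\mKt_{_\Sigma}$. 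Since $\valpha=\kerdual(\mKt_{_\Sigma})$ is a dual optimum, \theoremref{svm-many-kernels} then gives the uniform bound $\normof{h}^2\le 3m^{-\log_2(3)}\sum_{t=1}^m\valpha_t^\intercal\mKt_t\valpha_t$ over all $h\in\cF_{_\Sigma}$; this is the single place the optimality assumption enters, and it replaces the quantity that prior work simply hypothesizes to be a constant.

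Next I would bound the Rademacher average by Cauchy--Schwarz in the feature space: for any sign pattern $\vsigma$,
\[
\frac1n\sum_{i=1}^n\sigma_ih(\vx_i)=\frac1n\left\langle h,\sum_{i=1}^n\sigma_i\vphi(\vx_i)\right\rangle\le\frac{\normof{h}}{n}\left\|\sum_{i=1}^n\sigma_i\vphi(\vx_i)\right\|.
\]
Taking $\sup_{h\in\cF_{_\Sigma}}$, then $\E_{\vsigma}$, and applying Jensen's inequality to move the expectation inside the square root yields
\[
\hat\fR_\cS(\cF_{_\Sigma})\le\frac1n\sqrt{3m^{-\log_2(3)}\sum_{t=1}^m\valpha_t^\intercal\mKt_t\valpha_t}\;\sqrt{\E_{\vsigma}\left\|\sum_{i=1}^n\sigma_i\vphi(\vx_i)\right\|^2}.
\]
Expanding the last expectation, the Rademacher cross terms cancel, leaving $\E_{\vsigma}\|\sum_i\sigma_i\vphi(\vx_i)\|^2=\sum_{i=1}^n k_{_\Sigma}(\vx_i,\vx_i)=\sum_{t=1}^m\sum_{i=1}^n k_t(\vx_i,\vx_i)=\sum_{t=1}^m\tr[\mKt_t]$, using that the diagonal of a labeled kernel matrix equals that of the kernel matrix since $y_i^2=1$. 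This is exactly the first claimed bound. For the second bound I would substitute the two assumptions, $\sum_t\valpha_t^\intercal\mKt_t\valpha_t\le mB^2$ and $\tr[\mKt_t]=\sum_i k_t(\vx_i,\vx_i)\le nR^2$ (so $\sum_t\tr[\mKt_t]\le mnR^2$), which collapses the bound to $\frac1n\sqrt{3m^{-\log_2(3)}\cdot mnR^2\cdot mB^2}=\frac{BR}{\sqrt n}\sqrt{3m^{2-\log_2(3)}}$; the identity $2-\log_2(3)=1-\log_2(\nicefrac32)$ rewrites the exponent into the stated form, and since $2-\log_2(3)\approx0.415$ we get $\sqrt{m^{2-\log_2(3)}}=m^{0.2075\ldots}\in O(m^{0.208})$.

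I do not expect a genuine obstacle here: this is the Bartlett--Mendelson template, with \theoremref{svm-many-kernels} substituted for the usual ad hoc bound on $\valpha^\intercal\mKt\valpha$. The one point that needs care is the bookkeeping with the labels $y_i$ — specifically, confirming that the constraint phrased in the definition of $\cF_{_\Sigma}$ via $\mKt_{_\Sigma}$ and $\kerdual(\cdot)$ is exactly $\normof{h}^2$ in the feature space of $k_{_\Sigma}$, so that \theoremref{svm-many-kernels} applies verbatim to every member of the class; after that, the remainder is the standard RKHS-ball Rademacher estimate together with the routine exponent identity $2-\log_2 3 = 1-\log_2(3/2)$.
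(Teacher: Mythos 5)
Your proposal is correct and follows essentially the same route as the paper: embed $\cF_{_\Sigma}$ in a ball of radius $\sqrt{\valpha_{_\Sigma}^\intercal\mKt_{_\Sigma}\valpha_{_\Sigma}}$ in the feature space of $k_{_\Sigma}$, bound that radius via \theoremref{svm-many-kernels}, then run the standard Cauchy--Schwarz/Jensen/trace computation and finish with the substitutions $\sum_t\valpha_t^\intercal\mKt_t\valpha_t\le mB^2$ and $\sum_t\tr[\mKt_t]\le mnR^2$ together with the exponent identity $2-\log_2 3=1-\log_2(\nicefrac32)$. Your handling of the $y_i$ bookkeeping (the diagonal of $\mKt_{_\Sigma}$ coincides with that of $\mK_{_\Sigma}$ since $y_i^2=1$) matches the paper's treatment and the argument is complete.
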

Our proof parallels that of Lemma 22 in \cite{bartlett2002rademacher}, and a full proof is in \appendixref{rademacher-kernel-sum}.
The key difference between Bartlett and Mendelson's proof and ours is the assumption that \(\valpha\) is optimal, allowing us to apply \theoremref{svm-many-kernels}.

Next, we consider learning which kernels to sum.
In this setting, we allow an algorithm to pick any subset of kernels to sum, but require that Kernel SVM is used for prediction.
This is described by the hypothesis class \(\cF_{_\cP}\) defined in \equationref{cf-subset}.
Because the algorithm can pick any arbitrary subset, we are intuitively bounded by the worst risk over all subsets of kernels.
Specifically, \theoremref{rademacher-kernel-sum} suggests that the risk of \(\cF_{_\cP}\) is bounded by the risk of a subset with size \(m\).
That is, the risk of \(\cF_{_\cP}\) is bounded by the risk of using all kernels.
Our next theorem makes this intuition precise, because we only pay an asymptotic factor of \(\sqrt{\ln(m)}\) more when considering all possible subsets of kernels instead of only one subset of kernels.
\begin{theorem}
\label{thm:learn-kernel-bound}
Let \(\cS=\{(\vx_1,y_1),\ldots,(\vx_n,y_n)\}\) be a dataset.
Let \(k_1,\ldots,k_m\) be kernel functions.
Consider any \(\cP\subseteq[m]\).
Define \(k_{_\cP}(\cdot,\cdot)\defeq\sum_{t\in\cP}k_t(\cdot,\cdot)\).
Let \(\mKt_1,\ldots,\mKt_m,\mKt_{_\cP}\) be their labeled kernel matrices and \(\valpha_1,\ldots,\valpha_m,\valpha_{_\cP}\) be the corresponding Dual SVM solutions.
Assume \(k_t(\vx_i,\vx_i) \leq R^2\) and \(\valpha_t^\intercal\mKt_t\valpha_t\leq B^2\) for all \(t\in[m]\) and \(i\in[n]\).
Then,
\begin{align*}
	\hat\fR_\cS(\cF_{_\cP})
	&\leq \frac{BR \sqrt{3e\eta_0 ~ m^{(1-\log_2(\nicefrac{3}{2}))} \ceil{\ln(m)}}}{\sqrt n} \\
	&\in O\left(\frac{BRm^{0.208} \sqrt{\ln(m)}}{\sqrt n}\right)
\end{align*}
where \(\eta_0=\frac{23}{22}\).
\end{theorem}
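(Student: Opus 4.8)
The plan is to bound the Rademacher complexity of the subset class $\cF_{_\cP}$ by a union-bound / covering argument over the $2^m$ possible subsets $\cP\subseteq[m]$, combined with the single-subset bound already established in \theoremref{rademacher-kernel-sum}. First I would observe that $\cF_{_\cP}=\bigcup_{\cP\subseteq[m]}\cF_{_\cP}^{(\cP)}$, where $\cF_{_\cP}^{(\cP)}$ is the fixed-subset hypothesis class using exactly the kernels in $\cP$; each $\cF_{_\cP}^{(\cP)}$ is, up to relabeling, an instance of $\cF_{_\Sigma}$ over $|\cP|$ kernels, so \theoremref{rademacher-kernel-sum} gives $\hat\fR_\cS(\cF_{_\cP}^{(\cP)})\le \frac{BR}{\sqrt n}\sqrt{3|\cP|^{(1-\log_2(\nicefrac32))}}\le \frac{BR}{\sqrt n}\sqrt{3m^{(1-\log_2(\nicefrac32))}}$, since the exponent $1-\log_2(3/2)>0$ is increasing in the number of kernels. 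The key quantity to control is then $\E_{\vsigma}\sup_{\cP}\big(\frac1n\sum_i\sigma_i h_{\cP}(\vx_i)\big)$ where the supremum ranges over all subsets.

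The central step is a maximal-inequality / Massart-style argument: for each fixed $\cP$ and fixed $\vsigma$, the inner quantity $Z_\cP(\vsigma)=\sup_{h\in\cF_{_\cP}^{(\cP)}}\frac1n\sum_i\sigma_i h(\vx_i)$ is, by the representer-theorem form and Cauchy–Schwarz, controlled by $\frac1n\sqrt{\valpha_{_\cP}^\intercal\mKt_{_\cP}\valpha_{_\cP}}\cdot\sqrt{\vsigma^\intercal\mK_{_\cP}\vsigma}$ or more precisely has sub-Gaussian tails in $\vsigma$ with variance proxy $\sigma_\cP^2 \lesssim \frac{1}{n^2}\,\big(3|\cP|^{-\log_2(3)}\big)\,\tr[\mK_{_\cP}]\cdot(\text{trace of the relevant matrix})\le \frac{B^2R^2}{n}\,3m^{-\log_2(\nicefrac32)}$ after invoking \theoremref{svm-many-kernels} and the boundedness assumptions. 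Then, because there are $2^m$ subsets, the standard bound $\E\max_{j\le N} X_j \le \sqrt{2\sigma^2\ln N}$ for centered sub-Gaussian $X_j$ yields a factor $\sqrt{2\ln(2^m)}=\sqrt{2m\ln 2}$ — but that is too big. The improvement to $\sqrt{\ln m}$ must instead come from the observation that the supremum over subsets of a fixed size, and the chaining across sizes, is governed not by $2^m$ but by the effective number of "scales" $\ceil{\ln m}$: I would group subsets by cardinality $\lvert\cP\rvert\in\{1,\dots,m\}$, note the per-group bound scales like $m^{(1-\log_2(3/2))/2}\cdot |\cP|^{-1/2}\cdot(\text{something})$, and use a weighted union bound so that the contribution telescopes to $\sqrt{\ceil{\ln m}}$, with the constant $\eta_0=\tfrac{23}{22}$ and the factor $e$ absorbing the slack from $\sum_k 1/k \le \ln m + 1$ and from $\binom{m}{k}\le (em/k)^k$.

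Concretely, the key steps in order: (1) decompose $\cF_{_\cP}$ over subsets and reduce each fixed-subset complexity to \theoremref{rademacher-kernel-sum}; (2) express $Z_\cP(\vsigma)$ via the representer form and show it is sub-Gaussian in $\vsigma$ with variance proxy driven by $\valpha_{_\cP}^\intercal\mKt_{_\cP}\valpha_{_\cP}$ and $\tr[\mK_{_\cP}]$, both controlled by \theoremref{svm-many-kernels} and the $B,R$ assumptions; (3) apply a finite maximal inequality over the $2^m$ subsets but organized by cardinality, using $\binom mk\le(em/k)^k$ so the log-cardinality contributes only $k\ln(em/k)$ in level $k$; (4) balance/optimize the weights across levels so the worst level contributes $O(\ln m)$ inside the square root rather than $O(m)$, pin down the constants $3e\eta_0$ with $\eta_0=\tfrac{23}{22}$; (5) collect terms to get $\hat\fR_\cS(\cF_{_\cP})\le \frac{BR\sqrt{3e\eta_0\,m^{(1-\log_2(\nicefrac32))}\ceil{\ln m}}}{\sqrt n}$ and read off the $O(BRm^{0.208}\sqrt{\ln m}/\sqrt n)$ consequence.

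The main obstacle I anticipate is step (4): getting $\sqrt{\ln m}$ rather than the naive $\sqrt{m}$ out of a union bound over $2^m$ sets. The trick is that the quantities $Z_\cP$ are not independent — they share the same base kernels — and more importantly the decay $|\cP|^{-\log_2(3)}$ from \theoremref{svm-many-kernels} fights against the $\binom mk$ growth: at cardinality $k$ the total "mass" is $\binom mk$ hypotheses each with variance proxy $\propto k^{-\log_2(3)}$, and $\ln\binom mk \approx k\ln(em/k)$, so the level-$k$ contribution to the maximal inequality is $\sim k^{-\log_2(3)}\cdot k\ln(em/k) = k^{1-\log_2 3}\ln(em/k)$, which is maximized at small $k$ and sums to $O(\ln m)$; making this estimate tight enough to extract the stated constants $\eta_0=23/22$ and $e$ is the delicate bookkeeping. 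I would carry out this summation carefully, likely splitting at $k=1$ (or $k\le m/e$) and bounding the tail geometrically, deferring the full constant-chasing to the appendix.
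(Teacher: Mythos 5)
There is a genuine gap in your approach, and it sits exactly where you anticipated trouble: step (4). The cancellation you rely on --- ``the decay $|\cP|^{-\log_2 3}$ from \theoremref{svm-many-kernels} fights the $\binom{m}{k}$ growth'' --- does not survive a correct computation of the variance proxy. For a fixed $\cP$ with $|\cP|=k$, the sub-Gaussian parameter of $Z_\cP(\vsigma)$ is governed by the product $\valpha_{_\cP}^\intercal\mKt_{_\cP}\valpha_{_\cP}\cdot\tr[\mKt_{_\cP}]/n^2$, and while the first factor is at most $3k^{-\log_2(\nicefrac32)}B^2$ (decreasing in $k$), the trace factor is at most $knR^2$ (increasing in $k$). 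The net variance proxy therefore scales like $\frac{B^2R^2}{n}\,k^{\,1-\log_2(\nicefrac32)}\approx \frac{B^2R^2}{n}k^{0.415}$, i.e.\ it \emph{grows} with $k$; you appear to have dropped the factor of $k$ coming from $\tr[\mKt_{_\cP}]$ (and/or from $\sum_{t\in\cP}\valpha_t^\intercal\mKt_t\valpha_t\le kB^2$). With the correct proxy, the level-$k$ term in a Massart-type maximal inequality is of order $k^{\,1-\log_2(\nicefrac32)}\cdot\ln\binom{m}{k}$, which near $k=m/2$ is of order $m^{\,2-\log_2(\nicefrac32)}$, so the union bound costs a multiplicative $m^{1/2}$ on top of the fixed-subset bound --- an overall $m^{0.708}/\sqrt n$ rate, polynomially worse than the claimed $m^{0.208}\sqrt{\ln m}/\sqrt n$. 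No reorganization of the $2^m$ subsets by cardinality rescues this, because the hardest level is the one with $\binom{m}{m/2}$ subsets and near-maximal variance proxy.

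The paper avoids the union bound entirely. Following Cortes et al., it applies H\"older's inequality with conjugate exponents $q,r$ to split $\vw_{_\cP}^\intercal\bigl(\sum_i\sigma_iy_i\vphi_{_\cP}(\vx_i)\bigr)$ into a $\vw$-factor and a $\vsigma$-factor, then enlarges the $\vsigma$-factor from a sum over $t\in\cP$ to a sum over all $t\in[m]$. This makes the $\vsigma$-factor independent of $\cP$, so the supremum over subsets acts only on the deterministic $\vw$-factor, which is bounded via subadditivity, Jensen, and \theoremref{svm-many-kernels} by $B\sqrt{3m^{\,1-\log_2(\nicefrac32)}}$. The $\sqrt{\ln m}$ then comes not from $\ln(\#\text{subsets})$ but from a moment bound: with $r=2p$, one uses $\E_{\vsigma}[(\vsigma^\intercal\mKt_t\vsigma)^p]\le(\eta_0\,p\,\tr[\mKt_t])^p$ (this is where $\eta_0=\nicefrac{23}{22}$ enters) and chooses $p=\ceil{\ln m}$, so that $m^{1/(2p)}\le\sqrt e$ (this is where the $e$ comes from --- not from $\binom{m}{k}\le(em/k)^k$) while the moment contributes $\sqrt{p}=\sqrt{\ceil{\ln m}}$. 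Your step (1) and the monotonicity observation $s^{\,1-\log_2(\nicefrac32)}\le m^{\,1-\log_2(\nicefrac32)}$ are fine and reappear in the paper's bound of the $\vw$-factor, but the decoupling device in steps (2)--(4) needs to be replaced by this H\"older/moment argument.
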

If we tried to build this bound with the classical analytical method found in Lemma 22 of \cite{bartlett2002rademacher}, we would have to deal with a difficult supremum over the \(2^m\) distinct choices of kernels.
This would inflate the bound by a multiplicative factor of \(\sqrt{2^m}=2^{m-1}\).
However, our proof instead follows that of Theorem 1 in \cite{cortes2009new}.
This more complicated proof method allows us to pay a factor of \(\sqrt{ln(m)}\) to separate supremum over the choice of kernels and the expectation over the \vsigma vector.
This separation allows us to invoke \theoremref{svm-many-kernels}.
However, this proof technique also prevents us from building a claim as general as \theoremref{rademacher-kernel-sum}, instead only providing bounds using \(B\) and \(R\).
The full proof is found in \appendixref{rademacher-kernel-learn}.

\section{Bounds for Non-separable Data}
Recall the Primal and Dual SVMs with \(\ell_2\) slack variables from \sectionref{prelim}.
Now we show that if \(C=\frac12\), then all the other bounds hold using non-separable SVM instead of the separable one.
We achieve this by mirroring \theoremref{two-kernels}, which is used by all other results in this paper.
\begin{theorem}
\label{thm:two-kernels-slack}
Let \(\cS=\{(\vx_1,y_1),\ldots,(\vx_n,y_n)\}\) be a dataset.
Let \(k_1,k_2\) be kernel functions.
Define \(k_{1+2}(\cdot,\cdot)\defeq k_1(\cdot, \cdot) + k_2(\cdot, \cdot)\).
Let \(\mKt_1,\mKt_2,\mKt_{1+2}\) be their labeled kernel matrices and \(\valpha_1,\valpha_2,\valpha_{1+2}\) be the corresponding Dual SVM solutions with parameter \(C=\frac12\).
Then we have
\[
	\valpha_{1+2}^\intercal\mKt_{1+2}\valpha_{1+2}
	\leq
	\frac13(\valpha_1^\intercal\mKt_1\valpha_1 + \valpha_2^\intercal\mKt_2\valpha_2)
\]
Furthermore,
\[
	\valpha_{1+2}^\intercal\mKt_{1+2}\valpha_{1+2}
	\leq
	\frac23 \max\{\valpha_1^\intercal\mKt_1\valpha_1, \valpha_2^\intercal\mKt_2\valpha_2\}
\]
\end{theorem}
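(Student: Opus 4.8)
\emph{Plan.} The strategy is to mirror the proof of \theoremref{two-kernels} almost line for line, once two ingredients are in place: a slack-variable analogue of \lemmaref{one-kernel}, and an understanding of why the choice $C=\frac12$ is special. The overall shape is identical to the separable case; only the bookkeeping of the extra $\ell_2$ terms changes.

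First I would write out the KKT conditions of the Primal Kernel SVM with Slack Variables. Attaching multipliers $\alpha_i\ge0$ to the margin constraints $1-y_i\vw^\intercal\vphi(\vx_i)\le\xi_i$ and $\beta_i\ge0$ to the constraints $\xi_i\ge0$, stationarity in $\vw$ still gives $\vw=\sum_i\alpha_iy_i\vphi(\vx_i)$ exactly as in the separable case, while stationarity in $\vxi$ gives $C\xi_i=\alpha_i+\beta_i$. Combined with complementary slackness $\beta_i\xi_i=0$ and $\alpha_i,\beta_i\ge0$, this forces $\xi_i=\alpha_i/C$ at any optimum (if $\xi_i>0$ then $\beta_i=0$; if $\xi_i=0$ then $\alpha_i=\beta_i=0$). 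Substituting the stationarity relation and $\xi_i=\alpha_i/C$ into the other slackness condition $\alpha_i(1-y_i\vw^\intercal\vphi(\vx_i)-\xi_i)=0$ and summing over $i$ — the same computation as in \lemmaref{one-kernel} — yields the identity $\normof{\valpha}_1=\valpha^\intercal\mKt\valpha+\frac1C\normof{\valpha}_2^2$ whenever $\valpha=\kerdual_C(\mKt)$.

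Next I would eliminate $\vxi$ from the dual: for a fixed feasible $\valpha\ge0$ the best slack is $\xi_i=\alpha_i/C$, so $\kerdual_C(\mKt)$ is the maximizer over $\valpha\ge0$ of the reduced objective $g_\mKt(\valpha)\defeq\normof{\valpha}_1-\frac12\valpha^\intercal\mKt\valpha-\frac{1}{2C^2}\normof{\valpha}_2^2$, and crucially every nonnegative vector — in particular $\valpha_{1+2}$ — is feasible for each of $g_{\mKt_1},g_{\mKt_2}$. Plugging the identity from the previous paragraph into $g_\mKt$ at its maximizer gives $g_\mKt(\valpha)=\frac12\valpha^\intercal\mKt\valpha+\left(\frac1C-\frac{1}{2C^2}\right)\normof{\valpha}_2^2$, and this is exactly where $C=\frac12$ enters: that value (and only that value) makes the coefficient $\frac1C-\frac{1}{2C^2}$ vanish, so the optimal dual value equals $\frac12\valpha^\intercal\mKt\valpha$ — the same relation that powered the separable proof. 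For $C=\frac12$ the identity also reads $\normof{\valpha}_1=\valpha^\intercal\mKt\valpha+2\normof{\valpha}_2^2$.

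Finally I would run the algebra of \theoremref{two-kernels} with $g$ in place of the separable dual objective. Starting from $g_{\mKt_{1+2}}(\valpha_{1+2})$, using $\mKt_{1+2}=\mKt_1+\mKt_2$, and performing the same ``add zero'' trick — here adding and subtracting $\normof{\valpha_{1+2}}_1-2\normof{\valpha_{1+2}}_2^2$ — rewrites it as $g_{\mKt_1}(\valpha_{1+2})+g_{\mKt_2}(\valpha_{1+2})-\normof{\valpha_{1+2}}_1+2\normof{\valpha_{1+2}}_2^2$. Dual optimality of $\valpha_1$ and $\valpha_2$ bounds the first two terms by $g_{\mKt_1}(\valpha_1)$ and $g_{\mKt_2}(\valpha_2)$; the optimal-value formula turns all three $g$'s into terms of the form $\frac12\valpha^\intercal\mKt\valpha$; and the identity $\normof{\valpha_{1+2}}_1=\valpha_{1+2}^\intercal\mKt_{1+2}\valpha_{1+2}+2\normof{\valpha_{1+2}}_2^2$ collapses the residual $-\normof{\valpha_{1+2}}_1+2\normof{\valpha_{1+2}}_2^2$ into $-\valpha_{1+2}^\intercal\mKt_{1+2}\valpha_{1+2}$. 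Rearranging produces $\frac32\valpha_{1+2}^\intercal\mKt_{1+2}\valpha_{1+2}\le\frac12\valpha_1^\intercal\mKt_1\valpha_1+\frac12\valpha_2^\intercal\mKt_2\valpha_2$, which after dividing by $\frac32$ is the first inequality, and $a+b\le2\max\{a,b\}$ gives the $\frac23$ form. The step I expect to require the most care is the KKT bookkeeping for the slack variables — correctly handling the $\beta_i$ multipliers to reach $\xi_i=\alpha_i/C$ — together with verifying that the extra $\ell_2$ terms created by eliminating $\vxi$ cancel precisely at $C=\frac12$; for any other $C$ a stray $\normof{\valpha}_2^2$ term survives and the clean $\frac13$ and $\frac23$ constants are lost.
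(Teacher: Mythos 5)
Your proposal is correct and follows essentially the same route as the paper's proof: the same two KKT-derived facts ($\xi_i=\alpha_i/C$ and $\normof{\valpha}_1=\valpha^\intercal\mKt\valpha+C\normof{\vxi}_2^2$, which is your $\normof{\valpha}_1=\valpha^\intercal\mKt\valpha+\frac1C\normof{\valpha}_2^2$), the same add-zero decomposition, and the same dual-optimality comparison, with $C=\frac12$ killing the residual slack terms. The only presentational difference is that you eliminate $\vxi$ up front via the reduced objective $g_{\mKt}$, whereas the paper carries $\vxi_1,\vxi_2,\vxi_{1+2}$ explicitly and cancels the $(2C-1)$-weighted terms at the end.
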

The proof mirrors the proof of \theoremref{svm-many-kernels}, except for some careful book keeping for the slack vectors \(\vxi_1\), \(\vxi_2\), and \(\vxi_{1+2}\).
Again, it is the KKT conditions that allow us to bound and compare the \(\vxi\) vectors of the three Dual SVM problems.
The full proof is in \appendixref{proof-two-kernels-slack}.

With \theoremref{two-kernels-slack}, we can reproduce all other results without any changes to the original proofs.
Further, this sort of bound on \(C\) being a constant is common in learning theory literature such as PAC Bayes \cite{mcallester2007}.

\section{Experiment}
\label{sec:experiments}
We show some experimental results that verify our core theorem, \ie \theoremref{svm-many-kernels}.
Our experiment uses 8 fixed kernels from several kernels families.
We have 5 radial basis kernels, 1 linear kernel, 1 polynomial kernel, and 1 cosine kernel.
All our data is generated from a mixture of 4 Gaussians.
Two of the Gaussians generate the positive class while the other 2 generate the negative class.

We generate \(n=300\) samples in \(\bbR^{50}\).
For each of the 8 base kernels, we solve the Dual Kernel SVM problem, and empirically verify that \(\valpha_t^\intercal\mKt_t\valpha_t\leq 320 = B^2\).

Then, we arbitrarily permute the kernel matrices.
We solve the Dual Kernel SVM problem with the first kernel matrix denoted as \(\mKt_{_{\Sigma,1}}\).
Then we solve the SVM with the sum of the first two kernels, denoted as \(\mKt_{_{\Sigma,2}}\), and so on until we sum all 8 kernels.
Let \(\valpha_{_{\Sigma,m}}\) denote the dual solution vector corresponding sum of the first \(m\) of the 8 kernels.
That is,
	\[
		\valpha_{_{\Sigma,m}}
		\defeq
		\kerdual(\mKt_{_{\Sigma,m}})
		=
		\kerdual\left(\sum_{t=1}^{m}\mKt_t\right)
	\]
After solving each SVM problem, we keep track of the value of \(\valpha_{_{\Sigma,m}}^\intercal\mKt_{_{\Sigma,m}}\valpha_{_{\Sigma,m}}\) value.
We then plot this value against the two bounds provided by \theoremref{svm-many-kernels}:
\begin{align*}
	\valpha_{_{\Sigma,m}}^\intercal\mKt_{_{\Sigma,m}}\valpha_{_{\Sigma,m}}
	&\leq
	m^{-\log_2(3)} \sum_{t=1}^{m}\valpha_t^\intercal\mKt_t\valpha_t \\
	&\leq
	m^{-\log_2(\nicefrac32)} B^2 \\
\end{align*}
\figureref{many-kernel-exper} shows the difference between the true \(\valpha_{_{\Sigma,m}}^\intercal\mKt_{_{\Sigma,m}}\valpha_{_{\Sigma,m}}\) and the two bounds above.
We can observe that the true curve decreases roughly at the same rate as our bounds.
\begin{figure}[ht]
\vskip 0.2in
\begin{center}
\centerline{\includegraphics[width=\columnwidth]{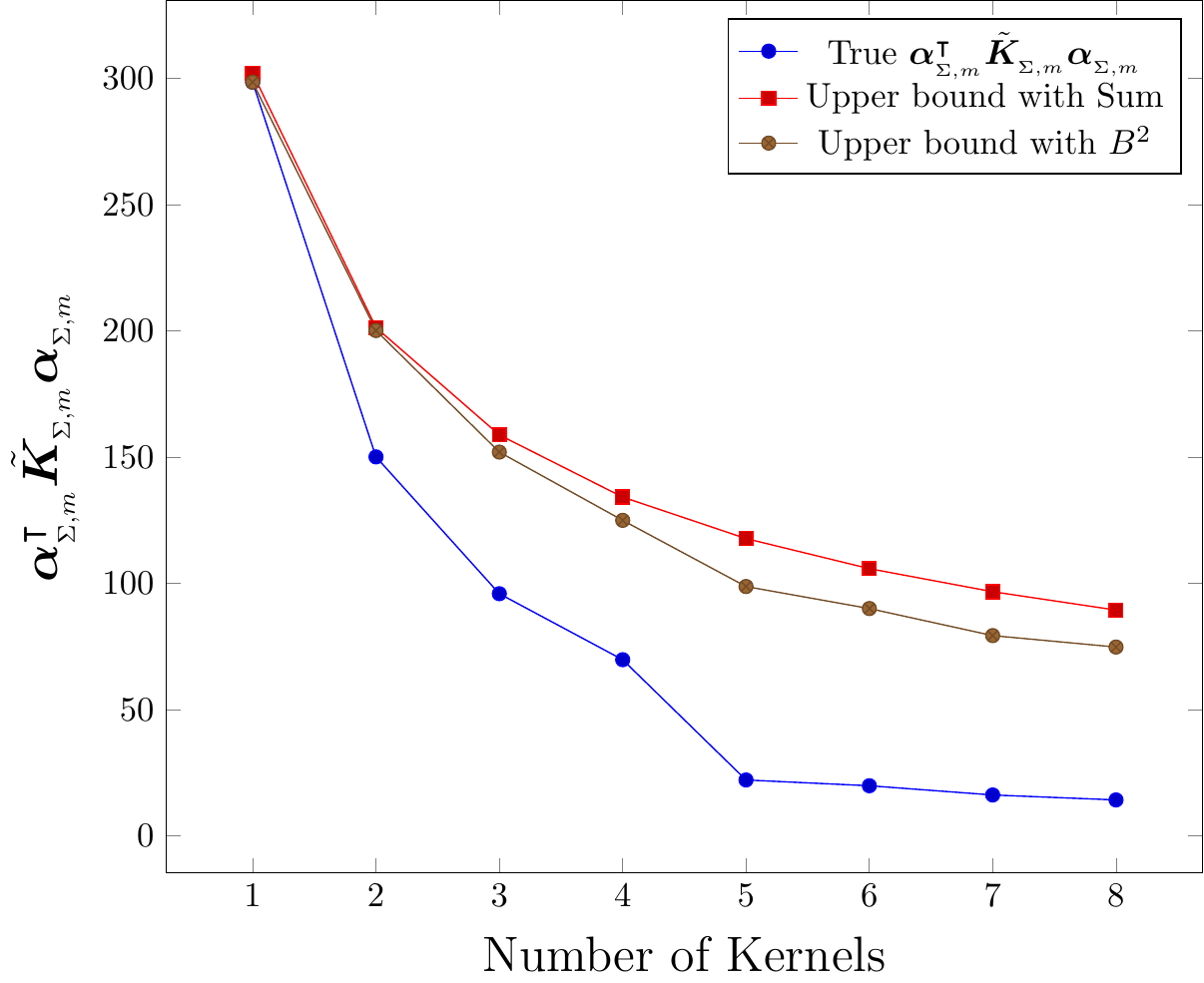}}
\caption{
Empirical value and bounds of \(\valpha_{_{\Sigma,m}}^\intercal\mKt_{_{\Sigma,m}}\valpha_{_{\Sigma,m}}\) in our experiment.
The blue curve is the empirical \(\valpha_{_{\Sigma,m}}^\intercal\mKt_{_{\Sigma,m}}\valpha_{_{\Sigma,m}}\).
The brown curve corresponds to \(m^{-\log_2(3)} \sum_{t=1}^{m}\valpha_t^\intercal\mKt_t\valpha_t\).
The red curve corresponds to \(m^{-\log_2(\nicefrac32)} B^2\).}
\label{fig:many-kernel-exper}
\end{center}
\vskip -0.2in
\end{figure}

\section{Conclusion}
Here we discuss possible directions to extend our work.
First, in the context of classification with kernel sums, we are not aware of any efficient and theoretically sound algorithms for learning which kernels to sum.
Additionally, we believe that optimality conditions such as KKT are necessary to build meaningful lower bounds in this setting.

One could also analyze the sample complexity of kernel products.
This idea is experimentally considered by \cite{duvenaud2013structure}.
This problem is notably more difficult since it requires understanding the Hadamard product of kernel matrices.

More generally, there is little existing work that leverages optimality conditions to justify assumptions made in learning problems.
In this paper, KKT tells us that we can control the quantity \(\valpha_{_\Sigma}\mKt_{_\Sigma}\valpha_{_\Sigma}\), justifying the assumptions made in prior work \cite{cortes2009new,srebro2006learning,sinha2016learning}.
We believe that this overall idea is general and applies to other convex optimization problems and classes of representer theorem problems, as well as other learning problems.

\bibliography{local}

\begin{thebibliography}{10}

\bibitem{argyriou2005learning}
Andreas Argyriou, Charles~A Micchelli, and Massimiliano Pontil.
\newblock Learning convex combinations of continuously parameterized basic
  kernels.
\newblock In {\em International Conference on Computational Learning Theory},
  pages 338--352. Springer, 2005.

\bibitem{bach2004multiple}
Francis~R Bach, Gert~RG Lanckriet, and Michael~I Jordan.
\newblock Multiple kernel learning, conic duality, and the smo algorithm.
\newblock In {\em Proceedings of the twenty-first international conference on
  Machine learning}, page~6. ACM, 2004.

\bibitem{balcan2006kernels}
Maria-Florina Balcan, Avrim Blum, and Santosh Vempala.
\newblock Kernels as features: On kernels, margins, and low-dimensional
  mappings.
\newblock {\em Machine Learning}, 65(1):79--94, 2006.

\bibitem{bartlett2002rademacher}
Peter~L Bartlett and Shahar Mendelson.
\newblock Rademacher and gaussian complexities: Risk bounds and structural
  results.
\newblock {\em Journal of Machine Learning Research}, 3(Nov):463--482, 2002.

\bibitem{boyd2004convex}
Stephen Boyd and Lieven Vandenberghe.
\newblock {\em Convex optimization}.
\newblock Cambridge university press, 2004.

\bibitem{chen2009learning}
Yihua Chen, Maya~R Gupta, and Benjamin Recht.
\newblock Learning kernels from indefinite similarities.
\newblock In {\em Proceedings of the 26th Annual International Conference on
  Machine Learning}, pages 145--152. ACM, 2009.

\bibitem{cortes2013learning}
Corinna Cortes, Marius Kloft, and Mehryar Mohri.
\newblock Learning kernels using local {R}ademacher complexity.
\newblock In {\em Advances in neural information processing systems}, pages
  2760--2768, 2013.

\bibitem{cortes20092}
Corinna Cortes, Mehryar Mohri, and Afshin Rostamizadeh.
\newblock L 2 regularization for learning kernels.
\newblock In {\em Proceedings of the Twenty-Fifth Conference on Uncertainty in
  Artificial Intelligence}, pages 109--116. AUAI Press, 2009.

\bibitem{cortes2009learning}
Corinna Cortes, Mehryar Mohri, and Afshin Rostamizadeh.
\newblock Learning non-linear combinations of kernels.
\newblock In {\em Advances in neural information processing systems}, pages
  396--404, 2009.

\bibitem{cortes2009new}
Corinna Cortes, Mehryar Mohri, and Afshin Rostamizadeh.
\newblock New generalization bounds for learning kernels.
\newblock {\em arXiv preprint arXiv:0912.3309}, 2009.

\bibitem{cortes2010generalization}
Corinna Cortes, Mehryar Mohri, and Afshin Rostamizadeh.
\newblock Generalization bounds for learning kernels.
\newblock In {\em Proceedings of the 27th International Conference on
  International Conference on Machine Learning}, pages 247--254. Omnipress,
  2010.

\bibitem{cortes2012algorithms}
Corinna Cortes, Mehryar Mohri, and Afshin Rostamizadeh.
\newblock Algorithms for learning kernels based on centered alignment.
\newblock {\em Journal of Machine Learning Research}, 13(Mar):795--828, 2012.

\bibitem{daume2012course}
Hal Daum{\'e}~III.
\newblock A course in machine learning.
\newblock {\em Publisher, ciml. info}, pages 5--73, 2012.

\bibitem{duvenaud2013structure}
David Duvenaud, James~Robert Lloyd, Roger Grosse, Joshua~B Tenenbaum, and
  Zoubin Ghahramani.
\newblock Structure discovery in nonparametric regression through compositional
  kernel search.
\newblock {\em arXiv preprint arXiv:1302.4922}, 2013.

\bibitem{friedman2001elements}
Jerome Friedman, Trevor Hastie, and Robert Tibshirani.
\newblock {\em The elements of statistical learning}, volume~1.
\newblock Springer series in statistics New York, NY, USA:, 2001.

\bibitem{gonen2011multiple}
Mehmet G{\"o}nen and Ethem Alpayd{\i}n.
\newblock Multiple kernel learning algorithms.
\newblock {\em Journal of machine learning research}, 12(Jul):2211--2268, 2011.

\bibitem{kakade2009complexity}
Sham~M Kakade, Karthik Sridharan, and Ambuj Tewari.
\newblock On the complexity of linear prediction: Risk bounds, margin bounds,
  and regularization.
\newblock In {\em Advances in neural information processing systems}, pages
  793--800, 2009.

\bibitem{kivinen2004online}
Jyrki Kivinen, Alexander~J Smola, and Robert~C Williamson.
\newblock Online learning with kernels.
\newblock {\em IEEE transactions on signal processing}, 52(8):2165--2176, 2004.

\bibitem{koltchinskii2002empirical}
Vladimir Koltchinskii, Dmitry Panchenko, et~al.
\newblock Empirical margin distributions and bounding the generalization error
  of combined classifiers.
\newblock {\em The Annals of Statistics}, 30(1):1--50, 2002.

\bibitem{mcallester2007}
David McAllester.
\newblock Generalization bounds and consistency.
\newblock In G{\"o}khan BakIr, Thomas Hofmann, Bernhard Sch{\"o}lkopf,
  Alexander~J Smola, Ben Taskar, and SVN Vishwanathan, editors, {\em Predicting
  structured data}, pages 247--261. MIT press, 2007.

\bibitem{shalev2014understanding}
Shai Shalev-Shwartz and Shai Ben-David.
\newblock {\em Understanding machine learning: From theory to algorithms}.
\newblock Cambridge university press, 2014.

\bibitem{shawe2004kernel}
John Shawe-Taylor, Nello Cristianini, et~al.
\newblock {\em Kernel methods for pattern analysis}.
\newblock Cambridge university press, 2004.

\bibitem{sinha2016learning}
Aman Sinha and John~C Duchi.
\newblock Learning kernels with random features.
\newblock In {\em Advances in Neural Information Processing Systems}, pages
  1298--1306, 2016.

\bibitem{soentpiet1999advances}
Rosanna Soentpiet et~al.
\newblock {\em Advances in kernel methods: support vector learning}.
\newblock MIT press, 1999.

\bibitem{sonnenburg2006large}
S{\"o}ren Sonnenburg, Gunnar R{\"a}tsch, Christin Sch{\"a}fer, and Bernhard
  Sch{\"o}lkopf.
\newblock Large scale multiple kernel learning.
\newblock {\em Journal of Machine Learning Research}, 7(Jul):1531--1565, 2006.

\bibitem{srebro2006learning}
Nathan Srebro and Shai Ben-David.
\newblock Learning bounds for support vector machines with learned kernels.
\newblock In {\em International Conference on Computational Learning Theory},
  pages 169--183. Springer, 2006.

\end{thebibliography}
\bibliographystyle{plain}

\newpage
\appendix

\allowdisplaybreaks

\section{Non-Separable Proof of Two Kernels (\theoremref{two-kernels-slack})}
\label{app:proof-two-kernels-slack}
In this section, we prove a theorem that mirrors that of \theoremref{two-kernels}, but with the \(\ell_2\) slack SVM.
First, we state the KKT conditions for the slack SVM.
Let \(\vr\) be the dual variables associated with the primal \(\vxi \succeq 0\) constraints.
Then, we have 8 conditions:
\begin{enumerate}
	\item \(1 - \xi_i - y_i \vw^\intercal \vphi(\vx_i) \leq 0 ~~ \forall i\in[n]\) \textit{(Primal Feasibility 1)}
	\item \(\xi_i \geq 0 ~~ \forall i\in[n]\) \textit{(Primal Feasibility 2)}
	\item \(\vw = \sum_{i=1}^{n} \alpha_i y_i \vphi(\vx_i)\) \textit{(Stationarity 1)}
	\item \(\vr = C\vxi - \valpha\) \textit{(Stationarity 2)}
	\item \(\alpha_i \geq 0 ~~ \forall i\in[n]\) \textit{(Dual Feasibility 1)}
	\item \(r_i \geq 0 ~~ \forall i\in[n]\) \textit{(Dual Feasibility 2)}
	\item \(\alpha_i (1-\xi_i-y_i\vw^\intercal\vphi(\vx_i)) = 0 ~~ \forall i\in[n]\) \textit{(Complementary Slackness 1)}
	\item \(r_i \xi_i = 0 ~~ \forall i\in[n]\) \textit{(Complementary Slackness 2)}
\end{enumerate}
We also provide two preliminary lemmas before proving the main theorem.
\begin{lemma}
\label{lemma:alpha-slack}
	Let \(\valpha,\vxi\) be the optimal solution to the \(\ell_2\) Slack Dual SVM problem with parameter \(C\).
	Then, \(\vxi=\frac1C \valpha\).
	This also implies \(\valpha^\intercal\vxi=C\normof{\vxi}_2^2\).
\end{lemma}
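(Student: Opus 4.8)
The plan is to show that at an optimum each slack coordinate is pinned to the smallest value allowed by the coupling constraint $0 \le \alpha_i \le C\xi_i$, which forces $\vxi = \frac1C\valpha$ coordinatewise, and then read off the quadratic identity.

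First I would use the structure of the Dual Kernel SVM with Slack Variables directly: the variable $\vxi$ enters the objective only through $-\frac12\normof{\vxi}_2^2$ and enters the constraints only through $\alpha_i \le C\xi_i$. Rearranging the constraint with $\alpha_i \ge 0$ gives $\xi_i \ge \alpha_i/C \ge 0$ for every $i$. Since $x \mapsto -\frac12 x^2$ is strictly decreasing on $[0,\infty)$, for any fixed feasible $\valpha$ the objective is maximized over $\vxi$ by taking each $\xi_i$ as small as its constraint allows, namely $\xi_i = \alpha_i/C$. Hence every optimal pair $(\valpha,\vxi)$ satisfies $\vxi = \frac1C\valpha$.

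Alternatively, and this is the route that matches the KKT conditions listed above, I would argue coordinatewise from Stationarity 2, $\vr = C\vxi - \valpha$. If $\xi_i > 0$, then Complementary Slackness 2 ($r_i\xi_i = 0$) forces $r_i = 0$, so $C\xi_i = \alpha_i$. If instead $\xi_i = 0$, then Dual Feasibility 2 ($r_i \ge 0$) gives $\alpha_i = C\xi_i - r_i \le 0$, which together with Dual Feasibility 1 ($\alpha_i \ge 0$) yields $\alpha_i = 0 = C\xi_i$. In both cases $C\xi_i = \alpha_i$, i.e. $\vxi = \frac1C\valpha$. Finally, substituting $\valpha = C\vxi$ gives $\valpha^\intercal\vxi = (C\vxi)^\intercal\vxi = C\normof{\vxi}_2^2$, which is the claimed identity.

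I do not expect a genuine obstacle here; the only subtlety is the boundary case $\xi_i = 0$ (equivalently $\alpha_i = 0$), which is resolved immediately by the coupling constraint $0 \le \alpha_i \le C\xi_i$. Everything else is a one-line optimality observation or a direct KKT manipulation.
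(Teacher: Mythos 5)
Your KKT route is essentially the paper's own argument: both substitute Stationarity~2 into Complementary Slackness~2 to get $C\xi_i^2=\alpha_i\xi_i$, and both then resolve the $\xi_i=0$ boundary case (the paper via the dual constraint $\xi_i\ge\alpha_i/C$, you via the sign of $r_i$ plus $\alpha_i\ge0$ --- either works). Your first, purely variational argument (the objective is strictly decreasing in each $\xi_i$ on the feasible set $\xi_i\ge\alpha_i/C\ge0$, so the optimum pins $\xi_i$ to the lower bound) is a correct and even more elementary alternative that bypasses the KKT bookkeeping entirely.
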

\begin{proof}
	First we substitute Stationarity 2 into Complementary Slackness 2:
\begin{align*}
	r_i\xi_i &= 0 \\
	(C\xi_i-\alpha_i)\xi_i &= 0 \\
	C\xi_i^2 &= \alpha_i\xi_i
\end{align*}
That is, when \(\xi_i\neq0\), we know that \(\xi_i=\frac{\alpha_i}{C}\).
This allows us to conclude that \(\xi_i\leq\frac{\alpha_i}{C}\), since both \(\alpha_i\) and \(C\) are nonnegative.
The dual problem has constraint \(\alpha_i\leq C\xi_i\), which is equivalent to \(\xi_i\geq\frac{\alpha_i}{C}\).
Hence \(\xi_i\) is both upper and lower bounded by \(\frac{\alpha_i}{C}\).
Therefore, \(\xi_i=\frac{\alpha_i}{C}\).
\end{proof}

\begin{lemma}
\label{lem:one-kernel-lemma-slack}
	Let \(\valpha,\vxi\) be the optimal solution to the \(\ell_2\) Slack Dual SVM problem on input \(\mKt\) with parameter \(C\).
	Then \(\normof{\valpha}_1 = \valpha^\intercal\mKt\valpha + C\normof{\vxi}_2^2\).
\end{lemma}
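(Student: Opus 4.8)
The plan is to mirror the proof of \lemmaref{one-kernel}, substituting the slack-SVM KKT conditions (the eight conditions listed above) in place of the separable ones, and then folding in \lemmaref{alpha-slack} to absorb the slack term. Recall that in \lemmaref{one-kernel} we derived \(\alpha_i = \sum_{j=1}^n \alpha_i\alpha_j[\mKt]_{i,j}\) by substituting Stationarity into Complementary Slackness. Here the obstruction is that Complementary Slackness~1 reads \(\alpha_i(1-\xi_i-y_i\vw^\intercal\vphi(\vx_i))=0\), so the extra \(-\xi_i\) term survives the substitution.

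Concretely, I would start from Complementary Slackness~1 and substitute Stationarity~1 (\(\vw=\sum_{j=1}^n\alpha_jy_j\vphi(\vx_j)\)) into it, exactly as in \lemmaref{one-kernel}. Expanding the inner product \(y_i\vw^\intercal\vphi(\vx_i)\) gives \(\sum_{j=1}^n\alpha_jy_iy_j\vphi(\vx_j)^\intercal\vphi(\vx_i)=\sum_{j=1}^n\alpha_j[\mKt]_{i,j}\). So for each \(i\),
\[
	\alpha_i(1-\xi_i) = \sum_{j=1}^n \alpha_i\alpha_j[\mKt]_{i,j}.
\]
Summing over \(i\in[n]\) and using Dual Feasibility~1 (\(\alpha_i\geq0\)) to write \(\sum_i\alpha_i=\normof{\valpha}_1\), this becomes
\[
	\normof{\valpha}_1 - \valpha^\intercal\vxi = \valpha^\intercal\mKt\valpha.
\]
The last step is to apply \lemmaref{alpha-slack}, which tells us \(\valpha^\intercal\vxi = C\normof{\vxi}_2^2\). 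Substituting this in and rearranging yields \(\normof{\valpha}_1 = \valpha^\intercal\mKt\valpha + C\normof{\vxi}_2^2\), which is the claim.

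I do not expect a genuine obstacle here; the work is essentially bookkeeping. The one subtlety to be careful about is the handling of the \(\xi_i\) term: rather than arguing \(\xi_i=\alpha_i/C\) termwise and substituting at the level of individual \(i\) (which would also work), it is cleanest to carry the term \(\valpha^\intercal\vxi\) through the summation and then invoke \lemmaref{alpha-slack} once at the end, since that lemma is stated precisely as \(\valpha^\intercal\vxi=C\normof{\vxi}_2^2\). This keeps the argument parallel to \lemmaref{one-kernel} and makes clear that the separable case is recovered by setting \(\vxi=\vzero\).
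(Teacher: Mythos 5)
Your proposal is correct and matches the paper's own proof essentially step for step: substitute Stationarity~1 into Complementary Slackness~1, sum over $i$ using Dual Feasibility to get $\normof{\valpha}_1 = \valpha^\intercal\vxi + \valpha^\intercal\mKt\valpha$, and then invoke \lemmaref{alpha-slack} to replace $\valpha^\intercal\vxi$ with $C\normof{\vxi}_2^2$. No gaps.
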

\begin{proof}
	First substitute Stationarity 1 into Complementary Slackness 1:
\begin{align*}
	0 &= \alpha_i(1-\xi_i-y_i\vw^\intercal\vphi(\vx_i)) \\
	0 &= \alpha_i\left(1-\xi_i-y_i\left(\sum\nolimits_{j=1}^{n} \alpha_j y_j \vphi(\vx_j)\right)^\intercal\vphi(\vx_i)\right) \\
	0 &= \alpha_i\left(1-\xi_i-\sum\nolimits_{j=1}^{n} \alpha_j y_i y_j \vphi(\vx_j)^\intercal\vphi(\vx_i)\right) \\
	0 &= \alpha_i\left(1-\xi_i-\sum\nolimits_{j=1}^{n} \alpha_j [\mKt]_{i,j}\right) \\
	0 &= \alpha_i-\alpha_i\xi_i-\sum_{j=1}^{n} \alpha_i\alpha_j [\mKt]_{i,j} \\
	\alpha_i &= \alpha_i\xi_i+\sum_{j=1}^{n} \alpha_i\alpha_j [\mKt]_{i,j}
\end{align*}
	Then, we sum up over all \(i\in[n]\):
\begin{align*}
	\sum_{i=1}^{n}\alpha_i &= \sum_{i=1}^{n}\alpha_i\xi_i+ \sum_{i=1}^{n}\sum_{j=1}^{n} \alpha_i\alpha_j [\mKt]_{i,j} \\
	\normof{\valpha}_1 &= \valpha^\intercal\vxi + \valpha^\intercal\mKt\valpha \\
	\normof{\valpha}_1 &= C\normof{\vxi}_2^2 + \valpha^\intercal\mKt\valpha
\end{align*}
\end{proof}
Now we prove the main theorem:
\begin{reptheorem}{two-kernels-slack}
Let \(\cS=\{(\vx_1,y_1),\ldots,(\vx_n,y_n)\}\) be a dataset.
Let \(k_1,k_2\) be kernel functions.
Define \(k_{1+2}(\cdot,\cdot)\defeq k_1(\cdot, \cdot) + k_2(\cdot, \cdot)\).
Let \(\mKt_1,\mKt_2,\mKt_{1+2}\) be their labeled kernel matrices and \(\valpha_1,\valpha_2,\valpha_{1+2}\) be the corresponding Dual SVM solutions with parameter \(C=\frac12\).
Then we have
\[
	\valpha_{1+2}^\intercal\mKt_{1+2}\valpha_{1+2}
	\leq
	\frac13(\valpha_1^\intercal\mKt_1\valpha_1 + \valpha_2^\intercal\mKt_2\valpha_2)
\]
Furthermore,
\[
	\valpha_{1+2}^\intercal\mKt_{1+2}\valpha_{1+2}
	\leq
	\frac23 \max\{\valpha_1^\intercal\mKt_1\valpha_1, \valpha_2^\intercal\mKt_2\valpha_2\}
\]
\end{reptheorem}
\begin{proof}
We start with the dual objective for \(k_{1+2}\):
\begin{align*}
	\normof{\valpha_{1+2}}_1 - \frac12 \valpha_{1+2}^\intercal\mKt_{1+2}\valpha_{1+2} - \frac12\normof{\vxi_{1+2}}_2^2
	&= \normof{\valpha_{1+2}}_1 - \frac12 \valpha_{1+2}^\intercal(\mKt_1+\mKt_2)\valpha_{1+2} - \frac12\normof{\vxi_{1+2}}_2^2 \\
	&= \left(\normof{\valpha_{1+2}}_1 - \frac12 \valpha_{1+2}^\intercal\mKt_1\valpha_{1+2} - \frac12\normof{\vxi_{1+2}}_2^2\right) \\
	&\hspace{1cm} + \left(\normof{\valpha_{1+2}}_1 - \frac12 \valpha_{1+2}^\intercal\mKt_2\valpha_{1+2} - \frac12\normof{\vxi_{1+2}}_2^2\right) \\
	&\hspace{1cm} + \left(\frac12\normof{\vxi_{1+2}}_2^2 - \normof{\valpha_{1+2}}_1\right) \\
	&\leq \left(\normof{\valpha_{1}}_1 - \frac12 \valpha_{1}^\intercal\mKt_1\valpha_{1} - \frac12\normof{\vxi_{1}}_2^2\right) \\
	&\hspace{1cm} + \left(\normof{\valpha_{2}}_1 - \frac12 \valpha_{2}^\intercal\mKt_2\valpha_{2} - \frac12\normof{\vxi_{2}}_2^2\right) \\
	&\hspace{1cm} + \left(\frac12\normof{\vxi_{1+2}}_2^2 - \normof{\valpha_{1+2}}_1\right) \\
	2\normof{\valpha_{1+2}}_1 - \frac12 \valpha_{1+2}^\intercal\mKt_{1+2}\valpha_{1+2} - \normof{\vxi_{1+2}}_2^2
	&\leq \left(\normof{\valpha_{1}}_1 - \frac12 \valpha_{1}^\intercal\mKt_1\valpha_{1} - \frac12\normof{\vxi_{1}}_2^2\right) + \left(\normof{\valpha_{2}}_1 - \frac12 \valpha_{2}^\intercal\mKt_2\valpha_{2} - \frac12\normof{\vxi_{2}}_2^2\right)
\end{align*}
By applying \lemmaref{one-kernel-lemma-slack} and some algebra, we have three useful equations:
\begin{itemize}
	\item \(2\normof{\valpha_{1+2}}_1-\frac12\valpha_{1+2}^\intercal\mKt_{1+2}\valpha_{1+2}-\normof{\vxi_{1+2}}_2^2 = \frac32 \valpha_{1+2}^\intercal\mKt_{1+2}\valpha_{1+2} + (2C-1)\normof{\vxi_{1+2}}_2^2\)
	\item \(\normof{\valpha_1}_1-\frac12\valpha_1^\intercal\mKt_1\valpha_1-\frac12\normof{\vxi_1}_2^2 = \frac12 \valpha_1^\intercal\mKt_1\valpha_1 + \frac{2C-1}{2}\normof{\vxi_1}_2^2\)
	\item \(\normof{\valpha_2}_1-\frac12\valpha_2^\intercal\mKt_2\valpha_2-\frac12\normof{\vxi_2}_2^2 = \frac12 \valpha_2^\intercal\mKt_2\valpha_2 + \frac{2C-1}{2}\normof{\vxi_2}_2^2\)
\end{itemize}
Applying these equations, we continue our inequality from before,
\begin{align*}
	\frac32 \valpha_{1+2}^\intercal\mKt_{1+2}\valpha_{1+2} + (2C-1)\normof{\vxi_{1+2}}_2^2
	&\leq \left(\frac12 \valpha_1^\intercal\mKt_1\valpha_1 + \frac{2C-1}{2}\normof{\vxi_1}_2^2\right) + \left(\frac12 \valpha_2^\intercal\mKt_2\valpha_2 + \frac{2C-1}{2}\normof{\vxi_2}_2^2\right) \\
	\frac32 \valpha_{1+2}^\intercal\mKt_{1+2}\valpha_{1+2}
	&\leq \frac12 \left(\valpha_1^\intercal\mKt_1\valpha_1 + \valpha_2^\intercal\mKt_2\valpha_2\right) + \frac{2C-1}{2}\left(\normof{\vxi_1}_2^2 + \normof{\vxi_2}_2^2 - 2\normof{\vxi_{1+2}}_2^2\right) \\
	\frac32 \valpha_{1+2}^\intercal\mKt_{1+2}\valpha_{1+2} &= \frac12 \left(\valpha_1^\intercal\mKt_1\valpha_1 + \valpha_2^\intercal\mKt_2\valpha_2\right) + 0 \\
	\valpha_{1+2}^\intercal\mKt_{1+2}\valpha_{1+2} &= \frac13 \left(\valpha_1^\intercal\mKt_1\valpha_1 + \valpha_2^\intercal\mKt_2\valpha_2\right)
\end{align*}
In the second to last line, we recall that \(C=\frac12\), which implies \(2C-1=0\).
\end{proof}

\section{Proof of Many Kernels (\theoremref{svm-many-kernels})}
\label{app:svm-many-kernels}

\begin{reptheorem}{svm-many-kernels}
Let \(\cS=\{(\vx_1,y_1),\ldots,(\vx_n,y_n)\}\) be a dataset.
Let \(k_1,k_2,\ldots,k_m\) be kernel functions.
Define \(k_{_\Sigma}(\cdot,\cdot)\defeq \sum_{t=1}^{m} k_t(\cdot, \cdot)\).
Let \(\mKt_1,\ldots,\mKt_m,\mKt_{_\Sigma}\) be their labeled kernel matrices and \(\valpha_1,\ldots,\valpha_m,\valpha_{_\Sigma}\) be the corresponding Dual SVM solutions.
Then we have
\[
	\valpha_{_\Sigma}^\intercal\mKt_{_\Sigma}\valpha_{_\Sigma}
	\leq
	3 m^{-\log_2(3)} \sum_{t=1}^{m}\valpha_t^\intercal\mKt_t\valpha_t
\]
Furthermore
\[
	\valpha_{_\Sigma}^\intercal\mKt_{_\Sigma}\valpha_{_\Sigma}
	\leq
	3 m^{-\log_2(\nicefrac32)} \max_{t\in[m]} \valpha_t^\intercal\mKt_t\valpha_t
\]
\end{reptheorem}
\begin{proof}
Let \(\ell\defeq\ceil{\log_2(m)}\) be the length of labels we give our base kernels.
Now, rename each kernel \(k_t\) with the length \(\ell\) bitstring representation of the number \(t\).
For instance, if \(\ell=4\) then we rename \(k_6\) to \(k_{0110}\).
For every length \(\ell-1\) bitstring \(b_0b_1 \ldots b_{\ell-1}\), define a new kernel
\[k_{b_0b_1 \ldots b_{\ell-1}}(\cdot,\cdot)\defeq k_{b_0b_1\ldots b_{\ell-1}0}(\cdot,\cdot) + k_{b_0b_1\ldots b_{\ell-1}1}(\cdot,\cdot)\]
Repeat this process of labeling with length \(\ell - 2\) bitstrings and so on until we have defined \(k_0\) and \(k_1\).
Lastly, we define \[k_{_\Sigma}(\cdot,\cdot) = k_0(\cdot,\cdot) + k_1(\cdot,\cdot)=\sum_{t=1}^{m} k_t(\cdot,\cdot)\].

Now, recall \theoremref{two-kernels} (or \theoremref{two-kernels-slack} if we are using the SVM with slack).
Let \([b_\ell]\defeq\{b_0 \ldots b_\ell | b\in\{0,1\}\}\) denote the set of all length \(\ell\) bitstrings.
Also, for every kernel \(k_{b_0 \ldots b_j}\), compute the associated kernel matrix \(\mKt_{b_0 \ldots b_j}\) and dual solution vector \(\valpha_{b_0 \ldots b_j}\).
\begin{claim}
\label{clm:kernel-sum-induction}
Fix \(j\in[\ell-1]\).
Then \[\valpha_{_\Sigma}^\intercal\mKt_{_\Sigma}\valpha_{_\Sigma} \leq \left(\frac13\right)^j \sum_{b_0 \ldots b_j \in [b_j]} \valpha_{b_0 \ldots b_j}^\intercal \mKt_{b_0 \ldots b_j} \valpha_{b_0 \ldots b_j}\]
\end{claim}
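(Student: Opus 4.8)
The plan is to prove \claimref{kernel-sum-induction} by induction on $j$, using \theoremref{two-kernels} (or \theoremref{two-kernels-slack} in the $\ell_2$-slack setting) as the sole refinement step. The key structural fact supplied by the construction is that, at every node strictly above the leaf level, the kernel is exactly the sum of its two children: $k_{b_0\ldots b_i} = k_{b_0\ldots b_i 0} + k_{b_0\ldots b_i 1}$, and hence $\mKt_{b_0\ldots b_i} = \mKt_{b_0\ldots b_i 0} + \mKt_{b_0\ldots b_i 1}$. This is precisely the hypothesis that lets us invoke \theoremref{two-kernels} to trade one level of the tree for a factor of $\frac13$.

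For the base case, the construction gives $k_{_\Sigma} = k_0 + k_1$ and $\mKt_{_\Sigma} = \mKt_0 + \mKt_1$, so \theoremref{two-kernels} immediately yields $\valpha_{_\Sigma}^\intercal\mKt_{_\Sigma}\valpha_{_\Sigma} \leq \frac13\bigl(\valpha_0^\intercal\mKt_0\valpha_0 + \valpha_1^\intercal\mKt_1\valpha_1\bigr)$; one more application of \theoremref{two-kernels} to $k_0$ and to $k_1$ (to match the indexing of $[b_j]$) settles the smallest value of $j$. For the inductive step, suppose the bound holds at level $j$ with $j+1 \leq \ell-1$; then $j \leq \ell-2$, so every level-$j$ node is internal with both children present (the tree is a perfect binary tree down through level $\ell-1$). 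Applying \theoremref{two-kernels} to each such node gives
\[
	\valpha_{b_0\ldots b_j}^\intercal\mKt_{b_0\ldots b_j}\valpha_{b_0\ldots b_j}
	\leq
	\frac13\bigl(\valpha_{b_0\ldots b_j 0}^\intercal\mKt_{b_0\ldots b_j 0}\valpha_{b_0\ldots b_j 0} + \valpha_{b_0\ldots b_j 1}^\intercal\mKt_{b_0\ldots b_j 1}\valpha_{b_0\ldots b_j 1}\bigr).
\]
Summing this over all level-$j$ labels, observing that as $b_0\ldots b_j$ ranges over level $j$ the pairs $b_0\ldots b_j 0$, $b_0\ldots b_j 1$ range over exactly the level-$(j+1)$ labels, then multiplying by $\bigl(\frac13\bigr)^j$ and chaining with the inductive hypothesis produces the bound at level $j+1$.

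The main thing to get right is the bookkeeping when $m$ is not a power of two, which is also the reason the statement stops at $j=\ell-1$ rather than $j=\ell$: in that case the tree is only \emph{perfect} down to level $\ell-1$, and some level-$(\ell-1)$ nodes are original base kernels with no children rather than genuine two-kernel sums, so \theoremref{two-kernels} cannot be applied uniformly at the last level. Restricting to $j \leq \ell-1$ keeps the induction entirely within the perfect part of the tree, where every node visited is a true sum of two children; the ragged bottom level is dealt with separately in the surrounding proof of \theoremref{svm-many-kernels}, where each leftover leaf-level term is simply bounded by $\max_{t\in[m]} \valpha_t^\intercal\mKt_t\valpha_t$. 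A secondary point of care is to keep the bitstring-length convention for $[b_j]$ consistent throughout, so that ``the sum over level-$j$ nodes'' on the right-hand side of the claim refers to the same object at each step of the induction.
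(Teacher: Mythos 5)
Your proof is correct and follows essentially the same route as the paper: induction on the tree level, applying \theoremref{two-kernels} to every node of the perfect part of the tree to trade one level for a factor of $\frac13$, with the base case being $k_{_\Sigma}=k_0+k_1$. Your explicit justification for stopping at $j=\ell-1$ (the bottom level may be ragged when $m$ is not a power of two) and your attention to the $[b_j]$ indexing convention are, if anything, more careful than the paper's own write-up, which has a couple of off-by-one slips in its base case and inductive hypothesis.
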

This claim follows from induction.
In the base case, \(j=1\), and \theoremref{two-kernels} tells us that \(\valpha_{_\Sigma}^\intercal\mKt_{_\Sigma}\valpha_{_\Sigma} \leq \frac13 (\valpha_{0}^\intercal\mKt_{0}\valpha_{0} + \valpha_{1}^\intercal\mKt_{1}\valpha_{1})\), matching the claim.
Now, assume the claim holds for \(j-1\).
Then,
\begin{align*}
	\valpha_{_\Sigma}^\intercal\mKt_{_\Sigma}\valpha_{_\Sigma} 
	&\leq \left(\frac13\right)^j \sum_{b_0 \ldots b_j \in [b_j]} \valpha_{b_0 \ldots b_j}^\intercal \mKt_{b_0 \ldots b_j} \valpha_{b_0 \ldots b_j} \\
	&\leq \left(\frac13\right)^j \sum_{b_0 \ldots b_j \in [b_j]} \frac13 (\valpha_{b_0 \ldots b_j 0}^\intercal \mKt_{b_0 \ldots b_j 0} \valpha_{b_0 \ldots b_j 0} + \valpha_{b_0 \ldots b_j 1}^\intercal \mKt_{b_0 \ldots b_j 1} \valpha_{b_0 \ldots b_j 1}) \\
	&= \left(\frac13\right)^{j+1} \sum_{b_0 \ldots b_{j+1} \in [b_{j+1}]} \valpha_{b_0 \ldots b_{j+1}}^\intercal \mKt_{b_0 \ldots b_{j+1}} \valpha_{b_0 \ldots b_{j+1}}
\end{align*}
This completes the proof of the claim.

Now we need to be careful when moving to the length \(\ell\) kernel labels because if \(m\) is not a power of two, then only some of the kernels have a length \(\ell\) label.
Let \(\cA\) be the set of all base kernels that have a length \(\ell-1\) label.
Let \(\cB\) be the rest of the base kernels, with a length \(\ell\) label.
By \claimref{kernel-sum-induction}, we know that
\begin{align*}
	\valpha_{_\Sigma}^\intercal\mKt_{_\Sigma}\valpha_{_\Sigma}
	&\leq \left(\frac13\right)^{\ell-1} \sum_{b_0 \ldots b_{\ell-1} \in [b_{\ell-1}]} \valpha_{b_0 \ldots b_{\ell-1}}^\intercal \mKt_{b_0 \ldots b_{\ell-1}} \valpha_{b_0 \ldots b_{\ell-1}} \\
	&= \sum_{b_0 \ldots b_{\ell-1} \in [b_{\ell-1}]} \left(\frac13\right)^{\ell-1} \valpha_{b_0 \ldots b_{\ell-1}}^\intercal \mKt_{b_0 \ldots b_{\ell-1}} \valpha_{b_0 \ldots b_{\ell-1}} \\
	&= \sum_{\substack{b_0 \ldots b_{\ell-1} \in [b_{\ell-1}] : \\ k_{b_0 \ldots b_{\ell-1}}\in\cA }} \left(\frac13\right)^{\ell-1} \valpha_{b_0 \ldots b_{\ell-1}}^\intercal \mKt_{b_0 \ldots b_{\ell-1}} \valpha_{b_0 \ldots b_{\ell-1}}
	+ \sum_{\substack{b_0 \ldots b_{\ell-1} \in [b_{\ell-1}] : \\ k_{b_0 \ldots b_{\ell-1}}\notin\cA }} \left(\frac13\right)^{\ell-1} \valpha_{b_0 \ldots b_{\ell-1}}^\intercal \mKt_{b_0 \ldots b_{\ell-1}} \valpha_{b_0 \ldots b_{\ell-1}}\\
	&\leq \sum_{\substack{b_0 \ldots b_{\ell-1} \in [b_{\ell-1}] : \\ k_{b_0 \ldots b_{\ell-1}}\in\cA }} \left(\frac13\right)^{\ell-1} \valpha_{b_0 \ldots b_{\ell-1}}^\intercal \mKt_{b_0 \ldots b_{\ell-1}} \valpha_{b_0 \ldots b_{\ell-1}}
	+ \sum_{\substack{b_0 \ldots b_{\ell} \in [b_{\ell}] : \\ k_{b_0 \ldots b_{\ell}}\in\cB }} \left(\frac13\right)^{\ell} \valpha_{b_0 \ldots b_{\ell}}^\intercal \mKt_{b_0 \ldots b_{\ell}} \valpha_{b_0 \ldots b_{\ell}}\\
	&\leq \sum_{\substack{b_0 \ldots b_{\ell-1} \in [b_{\ell-1}] : \\ k_{b_0 \ldots b_{\ell-1}}\in\cA }} \left(\frac13\right)^{\ell-1} \valpha_{b_0 \ldots b_{\ell-1}}^\intercal \mKt_{b_0 \ldots b_{\ell-1}} \valpha_{b_0 \ldots b_{\ell-1}}
	+ \sum_{\substack{b_0 \ldots b_{\ell} \in [b_{\ell}] : \\ k_{b_0 \ldots b_{\ell}}\in\cB }} \left(\frac13\right)^{\ell-1} \valpha_{b_0 \ldots b_{\ell}}^\intercal \mKt_{b_0 \ldots b_{\ell}} \valpha_{b_0 \ldots b_{\ell}}\\
	&\leq \left(\frac13\right)^{\ell-1} \left(\sum_{\substack{b_0 \ldots b_{\ell-1} \in [b_{\ell-1}] : \\ k_{b_0 \ldots b_{\ell-1}}\in\cA }} \valpha_{b_0 \ldots b_{\ell-1}}^\intercal \mKt_{b_0 \ldots b_{\ell-1}} \valpha_{b_0 \ldots b_{\ell-1}}
	+ \sum_{\substack{b_0 \ldots b_{\ell} \in [b_{\ell}] : \\ k_{b_0 \ldots b_{\ell}}\in\cB }} \valpha_{b_0 \ldots b_{\ell}}^\intercal \mKt_{b_0 \ldots b_{\ell}} \valpha_{b_0 \ldots b_{\ell}}\right)\\
	&= \left(\frac{1}{3}\right)^{\ell-1} \sum_{t=1}^m \valpha_t^\intercal\mKt_t\valpha_t
\end{align*}
Where the second inequalty applies \theoremref{two-kernels} and the last equality uses the fact that all base kernels are in either \(\cB\) or \(\cA\).

Lastly, recall that \(\ell=\ceil{\log_2(m)}\).
\[
	\left(\frac13\right)^{\ell-1}
	=
	3^{1-\ceil{\log_2(m)}}
	=
	3
	\cdot
	3^{-\ceil{\log_2(m)}}
	\leq 3 \cdot 3^{-\log_2(m)}
	= 3 \cdot m^{-\log_2(3)}
\]
Therefore, overall, we have
\[
	\valpha_{_\Sigma}^\intercal\mKt_{_\Sigma}\valpha_{_\Sigma}
	\leq
	3 m^{-\log_2(3)}
	\sum_{t=1}^m \valpha_t^\intercal\mKt_t\valpha_t
\]
\end{proof}

\section{Proof of Kernel Sum Rademacher (\theoremref{rademacher-kernel-sum})}
\label{app:rademacher-kernel-sum}

\begin{reptheorem}{rademacher-kernel-sum}
Let \(\cS=\{(\vx_1,y_1),\ldots,(\vx_n,y_n)\}\) be a dataset.
Let \(k_1,\ldots,k_m\) be kernel functions.
Define \(k_{_\Sigma}(\cdot,\cdot)\defeq\sum_{t=1}^{m}k_t(\cdot,\cdot)\).
Let \(\mKt_1,\ldots,\mKt_m,\mKt_{_\Sigma}\) be their labeled kernel matrices and \(\valpha_1,\ldots,\valpha_m,\valpha_{_\Sigma}\) be the corresponding Dual SVM solutions.
Then,
\[
	\hat\fR_\cS(\cF_{_\Sigma})
	\leq
	\frac1n\sqrt{3m^{-\log_2(3)} \left(\sum_{t=1}^{m}\tr[\mKt_t]\right) \sum_{t=1}^{m}\valpha_t^\intercal\mKt_t\valpha_t}
\]
Further, if we assume \(\valpha_t^\intercal\mKt_t\valpha_t\leq B^2\) and \(k_t(\vx_i,\vx_i) \leq R^2\) for all \(t\in[m], i\in[n]\), then
\[
	\hat\fR_\cS(\cF_{_\Sigma})
	\leq
	\frac{BR}{\sqrt n} ~ \sqrt{3m^{(1-\log_2(\nicefrac32))}}
\]
\end{reptheorem}
This proof very closely parallels that of Lemma 22 in \cite{bartlett2002rademacher}.
We produce the entire proof here for completeness.
First, note that
\[
	\cF_{_\Sigma} \subseteq \{\vx\mapsto\vw_{_\Sigma}^\intercal\vphi_{_\Sigma} | \normof{\vw_{_\Sigma}}_2 \leq B_{_\Sigma}\}
\]
Where \(\vphi_{_\Sigma}\) is the concatenation of the feature spaces associated with each of the \(m\) kernels, and \(B_{_\Sigma}^2 = \valpha_{_\Sigma}^\intercal\mKt_{_\Sigma}\valpha_{_\Sigma}\).
Then,
\begin{align*}
	\hat\fR_\cS(\cF_{_\Sigma})
	&\leq \frac1n \E_{\vsigma} \left[ \sup_{\normof{\vw_{_\Sigma}} \leq B_{_\Sigma}} \left(\vw_{_\Sigma}^\intercal \sum_{i=1}^{n} \sigma_iy_i\vphi_{_\Sigma}(\vx_i) \right) \right] \\
	&= \frac{B_{_\Sigma}}{n} \E_{\vsigma} \left[ \normoflr{\sum_{i=1}^{n} \sigma_iy_i\vphi_{_\Sigma}(\vx_i)}_2 \right] \\
	&= \frac{B_{_\Sigma}}{n} \E_{\vsigma} \left[ \sqrt{\normoflr{\sum_{i=1}^{n} \sigma_iy_i\vphi_{_\Sigma}(\vx_i)}_2^2} ~ \right] \\
	&= \frac{B_{_\Sigma}}{n} \E_{\vsigma} \left[ \sqrt{\sum\nolimits_{i,j=1}^{n} \sigma_i\sigma_j[\mKt_{_\Sigma}]_{i,j} } ~ \right] \\
	&\leq \frac{B_{_\Sigma}}{n} \sqrt{ \E_{\vsigma} \left[\sum\nolimits_{i,j=1}^{n} \sigma_i\sigma_j[\mKt_{_\Sigma}]_{i,j} \right]  } \\
	&= \frac{B_{_\Sigma}}{n} \sqrt{ \E_{\vsigma} \left[\sum\nolimits_{i=1}^{n} \sigma_i^2[\mKt_{_\Sigma}]_{i,i} \right]  } \\
	&= \frac{B_{_\Sigma}}{n} \sqrt{ \E_{\vsigma} \left[\sum\nolimits_{i=1}^{n} [\mKt_{_\Sigma}]_{i,i} \right]  } \\
	&= \frac{B_{_\Sigma}}{n} \sqrt{ \tr[\mKt_{_\Sigma}]  } \\
	&= \frac{B_{_\Sigma}}{n} \sqrt{ \sum_{t=1}^m \tr[\mKt_t]  } \\
	&\leq \frac{1}{n} ~ \cdot ~ \sqrt{3m^{(1-\log_2(3))} \sum_{t=1}^{m} \valpha_t^\intercal\mKt_t\valpha_t} ~ \cdot ~ \sqrt{\sum_{t=1}^m \tr[\mKt_t]  } \\
	&= \frac1n\sqrt{3m^{-\log_2(3)} \left(\sum_{t=1}^{m}\tr[\mKt_t]\right) \sum_{t=1}^{m}\valpha_t^\intercal\mKt_t\valpha_t} \\
\end{align*}
The second inequality is Jensen's, and the last inequality is \theoremref{svm-many-kernels}.
This completes the first part of the proof.
We can then substitute in \(B^2\) and \(R^2\):
\begin{align*}
	\hat\fR_\cS(\cF_{_\Sigma})
	&\leq \frac1n\sqrt{3m^{-\log_2(3)} \left(\sum_{t=1}^{m}\tr[\mKt_t]\right) \sum_{t=1}^{m}\valpha_t^\intercal\mKt_t\valpha_t} \\
	&\leq \frac1n\sqrt{3m^{-\log_2(3)} \left(\sum_{t=1}^{m} nR^2\right) \sum_{t=1}^{m}B^2} \\
	&= \frac1n\sqrt{3m^{-\log_2(3)} \cdot mnR^2 \cdot mB^2} \\
	&= \frac{BR}{\sqrt n} ~ \sqrt{3m^{(1-\log_2(\nicefrac32))}}
\end{align*}

\section{Proof of Learning Kernels (\theoremref{learn-kernel-bound})}
\label{app:rademacher-kernel-learn}

\begin{reptheorem}{learn-kernel-bound}
Let \(\cS=\{(\vx_1,y_1),\ldots,(\vx_n,y_n)\}\) be a dataset.
Let \(k_1,\ldots,k_m\) be kernel functions.
Consider any \(\cP\subseteq[m]\).
Define \(k_{_\cP}(\cdot,\cdot)\defeq\sum_{t\in\cP}k_t(\cdot,\cdot)\).
Let \(\mKt_1,\ldots,\mKt_m,\mKt_{_\cP}\) be their labeled kernel matrices and \(\valpha_1,\ldots,\valpha_m,\valpha_{_\cP}\) be the corresponding Dual SVM solutions.
Assume \(k_t(\vx_i,\vx_i) \leq R^2\) and \(\valpha_t^\intercal\mKt_t\valpha_t\leq B^2\) for all \(t\in[m]\) and \(i\in[n]\).
Then,
\[
	\hat\fR_\cS(\cF_{_\cP})
	\leq
	\frac{BR \sqrt{3e\eta_0 ~ m^{(1-\log_2(\nicefrac{3}{2}))} \ceil{\ln(m)}}}{\sqrt n}
\]
where \(\eta_0=\frac{23}{22}\).
\end{reptheorem}
This proof closely follows that of Theorem 1 in \cite{cortes2009new}.
\begin{proof}
Let \(s\defeq \abs{\cP}\).
Let \(\vw_{_\cP}\) be the optimal Primal SVM solution using subset of kernels \cP.
Note that \(\vw_{_\cP}\) is a \textit{concatenation} of \(s\) labeled and scaled feature vectors.
To be precise, let \(\vphi_t\) be the feature map for the \(t^{th}\) kernel and define \(\vw_t\defeq\sum_{i=1}^{n}\alpha_iy_i\vphi_t(\vx_i)\).
Then \(\vw_{_\cP}=\sbmat{\vw_{t_1}^\intercal & \ldots & \vw_{t_{s}}^\intercal}^\intercal\), where \(t_i\) is the \(i^{th}\) smallest element of \cP.

Consider some \(q,r>1\) such that \(\frac1q + \frac1r = 1\).
Then,
\begin{align*}
	\hat\fR_\cS(\cF_{\cP})
	&\defeq \frac1n \E_{\vsigma} \left[ \sup_{h\in\cF_{_\Sigma}} \sum_{i=1}^{n} \sigma_i h(\vx_i,y_i) \right] \\
	&\leq \frac1n \E_{\vsigma} \left[\sup_{s\in[m]} \sup_{\abs{\cP}=s} \sup_{\vw_{_\cP}} \vw_{_\cP}^\intercal \left(\sum\nolimits_{i=1}^{n} \sigma_i y_i \vphi_{_\cP}(\vx_i)\right) \right] \\
	&\leq \frac1n \E_{\vsigma}
		\left[
			\sup_{s\in[m]}
			\sup_{\abs{\cP}=s}
			\sup_{\vw_{_\cP}}
			\left(\sum\nolimits_{t\in\cP} \normof{\vw_t}_2^q \right)^{\nicefrac{1}{q}}
			\left(\sum\nolimits_{t\in\cP} \normoflr{\sum\nolimits_{i=1}^{n}\sigma_i y_i \vphi_t(\vx_i)}_2^r\right)^{\nicefrac{1}{r}}
		\right] \\
	&\leq \frac1n \E_{\vsigma}
		\left[
			\sup_{s\in[m]}
			\sup_{\abs{\cP}=s}
			\sup_{\vw_{_\cP}}
			\left(\sum\nolimits_{t\in\cP} \normof{\vw_t}_2^q \right)^{\nicefrac{1}{q}}
			\left(\sum\nolimits_{t=1}^m \normoflr{\sum\nolimits_{i=1}^{n}\sigma_i y_i \vphi_t(\vx_i)}_2^r\right)^{\nicefrac{1}{r}}
		\right] \\
	&=  \frac1n
		\left[
			\sup_{s\in[m]}
			\sup_{\abs{\cP}=s}
			\sup_{\vw_{_\cP}}
			\left(\sum\nolimits_{t=1}^{m} \normof{\vw_t}_2^q \right)^{\nicefrac{1}{q}}
		\right]
		\cdot
		\E_{\vsigma}
		\left[
			\left(\sum\nolimits_{t=1}^{m} \normoflr{\sum\nolimits_{i=1}^{n}\sigma_i y_i \vphi_t(\vx_i)}_2^r\right)^{\nicefrac{1}{r}}
		\right]
\end{align*}
The third line follows exactly from Lemma 5 in \cite{cortes2009new}.
We bound both terms separately.
We only substantially differ from the original proof in bounding the first term.
To start, note that \(f(x)=x^{\nicefrac1q}\) is subadditive for \(\nicefrac1q<1\):
\begin{align*}
	\left(\sum\nolimits_{t\in\cP} \normof{\vw_t}_2^q \right)^{\nicefrac{1}{q}}
	&\leq \sum_{t\in\cP} \left(\normof{\vw_t}_2^q\right)^{\nicefrac1q} \\
	&= \sum_{t\in\cP} \normoflr{\sum_{i=1}^{n}\alpha_iy_i\vphi_t(\vx_i)}_2 \\
	&= s \sum_{t\in\cP} \frac1s \sqrt{\normoflr{\sum_{i=1}^{n}\alpha_iy_i\vphi_t(\vx_i)}_2^2} \\
	&\leq s \sqrt{\sum_{t\in\cP} \frac1s \normoflr{\sum_{i=1}^{n}\alpha_iy_i\vphi_t(\vx_i)}_2^2} \\
	&= \sqrt{s \cdot \sum_{t\in\cP} \valpha_{_\cP}^\intercal\mKt_t\valpha_{_\cP}} \\
	&= \sqrt{s \cdot \valpha_{_\cP}^\intercal\mKt_{_\cP}\valpha_{_\cP}} \\
	&\leq \sqrt{s \cdot 3s^{^{-\log_2(\nicefrac32)}} B^2} \\
	&= B\sqrt{3s^{^{(1-\log_2(\nicefrac32))}}}
\end{align*}
The second inequality follows from Jensen's, and the last inequality is \theoremref{two-kernels}.

We start our bound of the second term by applying Jensen's Inequality:
\begin{align*}
	\E_{\vsigma}
	\left[
		\left(\sum\nolimits_{t=1}^{m} \normoflr{\sum\nolimits_{i=1}^{n}\sigma_i \vphi_t(\vx_i)}_2^r\right)^{\nicefrac{1}{r}}
	\right]
	&\leq
	\left(
	\E_{\vsigma}
	\left[
		\sum\nolimits_{t=1}^{m} \normoflr{\sum\nolimits_{i=1}^{n}\sigma_i \vphi_t(\vx_i)}_2^r
	\right]
	\right)^{\nicefrac{1}{r}} \\
	&=
	\left(
	\sum\nolimits_{t=1}^{m}
	\E_{\vsigma} \left[ \normoflr{\sum\nolimits_{i=1}^{n}\sigma_i \vphi_t(\vx_i)}_2^r \right]
	\right)^{\nicefrac{1}{r}}
\end{align*}
We detour to bound the inner expectation.
Assume that \(r\) is an even integer.
That is, \(r=2p\) for some integer \(p\).
\begin{align*}
	\E_{\vsigma} \left[ \normoflr{\sum\nolimits_{i=1}^{n}\sigma_i \vphi_t(\vx_i)}_2^r \right]
	&= \E_{\vsigma} \left[ \left( \sum\nolimits_{i,j=1}^{n} \sigma_i\sigma_j k_t(\vx_i,\vx_j) \right)^p \right] \\
	&= \E_{\vsigma} \left[ \left( \vsigma^\intercal\mKt_t\vsigma \right)^p\right] \\
	&\leq \left(\eta_0p\tr[\mKt]\right)^p
\end{align*}
Where the last line follows from Lemma 1 in \cite{cortes2010generalization}, where \(\eta_0=\frac{23}{22}\).
Returning to the bound of the second term,
\begin{align*}
	\E_{\vsigma}
	\left[ \left(\sum\nolimits_{t=1}^{m} \normoflr{\sum\nolimits_{i=1}^{n}\sigma_i \vphi_t(\vx_i)}_2^r\right)^{\nicefrac{1}{r}} \right]
	&\leq
	\left( \sum\nolimits_{t=1}^{m} \E_{\vsigma} \left[ \normoflr{\sum\nolimits_{i=1}^{n}\sigma_i \vphi_t(\vx_i)}_2^r \right] \right)^{\nicefrac{1}{2p}}	\\
	&\leq \left( \sum\nolimits_{t=1}^{m} \left(\eta_0p\tr[\mKt_t]\right)^p \right)^{\nicefrac{1}{2p}} \\
	&\leq \left( \sum\nolimits_{t=1}^{m} \left(\eta_0p ~ nR^2 \right)^p \right)^{\nicefrac{1}{2p}} \\
	&= \left( m \left(\eta_0p ~ nR^2 \right)^p \right)^{\nicefrac{1}{2p}} \\
	&= m^{\nicefrac1{2p}} \sqrt{\eta_0p ~ nR^2}
\end{align*}
By differentiating, we find that \(p=\ln(m)\) minimizes this expression.
We required \(p\) to be an integer, so we instead take \(p=\ceil{\ln(m)}\).
\begin{align*}
	\E_{\vsigma}
	\left[
		\left(\sum\nolimits_{t=1}^{m} \normoflr{\sum\nolimits_{i=1}^{n}\sigma_i \vphi_t(\vx_i)}_2^r\right)^{\nicefrac{1}{r}}
	\right]
	&\leq R m^{\nicefrac1{2p}} \sqrt{\eta_0pn} \\
	&= R m^{\frac{1}{2\ceil{\ln(m)}}} \sqrt{\eta_0 \ceil{\ln(m)}n} \\
	&\leq R \sqrt{e\eta_0 \ceil{\ln(m)}n}
\end{align*}
Combining the first and second terms' bounds, we return to the bound of the Rademacher complexity itself:
\begin{align*}
	\hat\fR_\cS(\cF_{_\cP})
	&\leq
		\frac1n
		\left[
			\sup_{s\in[m]}
			\sup_{\abs{\cP}=s}
			\sup_{\vw_{_\cP}}
			\left(\sum\nolimits_{t=1}^{m} \normof{\vw_t}_2^q \right)^{\nicefrac{1}{q}}
		\right]
		\cdot
		\E_{\vsigma}
		\left[
			\left(\sum\nolimits_{t=1}^{m} \normoflr{\sum\nolimits_{i=1}^{n}\sigma_i y_i \vphi_t(\vx_i)}_2^r\right)^{\nicefrac{1}{r}}
		\right] \\
	&\leq
		\frac1n
		\left[ \sup_{s\in[m]} B \sqrt{3s^{^{(1-\log_2(\nicefrac{3}{2}))}}}\right]
		\cdot
		\left[R \sqrt{e\eta_0 \ceil{\ln(m)}n}\right] \\
	&=
		\frac1n
		\left[ B \sqrt{3m^{^{(1-\log_2(\nicefrac{3}{2}))}}}\right]
		\cdot
		\left[R \sqrt{e\eta_0 \ceil{\ln(m)}n}\right] \\
	&=
		\frac{BR \sqrt{3e\eta_0 ~ m^{^{(1-\log_2(\nicefrac32))}} \ceil{\ln(m)}}}{\sqrt n}
\end{align*}
\end{proof}

\end{document}